\definecolor{backgroundcolor}{rgb}{ .867,  .922,  .969}
\useunder{\uline}{\ul}{}
\theoremstyle{plain}
\newtheorem{theorem}{Theorem}[section]
\newtheorem{proposition}[theorem]{Proposition}
\theoremstyle{definition}
\theoremstyle{remark}
\newtheorem{remark}[theorem]{Remark}
\begin{document}
\twocolumn[
\icmltitle{MARGE: Improving Math Reasoning for LLMs with Guided Exploration}

\icmlsetsymbol{equal}{*}

\begin{icmlauthorlist}
\icmlauthor{Jingyue Gao}{yyy}
\icmlauthor{Runji Lin}{comp}
\icmlauthor{Keming Lu}{comp}
\icmlauthor{Bowen Yu}{comp}
\icmlauthor{Junyang Lin}{comp}
\icmlauthor{Jianyu Chen}{yyy,sqz}
\end{icmlauthorlist}

\icmlaffiliation{yyy}{Institute for Interdisciplinary Information Sciences, Tsinghua University, Beijing, China}
\icmlaffiliation{comp}{Alibaba Group}
\icmlaffiliation{sqz}{Shanghai Qi Zhi Institute, Shanghai, China}

\icmlcorrespondingauthor{Runji Lin}{linrunji.lrj@alibaba-inc.com}
\icmlcorrespondingauthor{Jianyu Chen}{jianyuchen@tsinghua.edu.cn}

\icmlkeywords{LLM, reasoning, Reinforcement Learning, Exploration}

\vskip 0.3in
]

\printAffiliationsAndNotice{}

\begin{abstract}

Large Language Models (LLMs) exhibit strong potential in mathematical reasoning, yet their effectiveness is often limited by a shortage of high-quality queries.
This limitation necessitates scaling up computational responses through self-generated data, yet current methods struggle due to spurious correlated data caused by ineffective exploration across all reasoning stages.
To address such challenge, we introduce \textbf{MARGE}: Improving \textbf{Ma}th \textbf{R}easoning with \textbf{G}uided \textbf{E}xploration, a novel method to address this issue and enhance mathematical reasoning through hit-guided exploration.
MARGE systematically explores intermediate reasoning states derived from self-generated solutions, enabling adequate exploration and improved credit assignment throughout the reasoning process.
Through extensive experiments across multiple backbone models and benchmarks, we demonstrate that MARGE significantly improves reasoning capabilities without requiring external annotations or training additional value models.
Notably, MARGE improves both single-shot accuracy and exploration diversity, mitigating a common trade-off in alignment methods.
These results demonstrate MARGE's effectiveness in enhancing mathematical reasoning capabilities and unlocking the potential of scaling self-generated training data. 
Our code and models are available at \href{https://github.com/georgao35/MARGE}{this link}.
\end{abstract}

\section{Introduction}

\begin{figure}[htbp]
    \vskip -0.05in
    \centering
    \includegraphics[width=\linewidth]{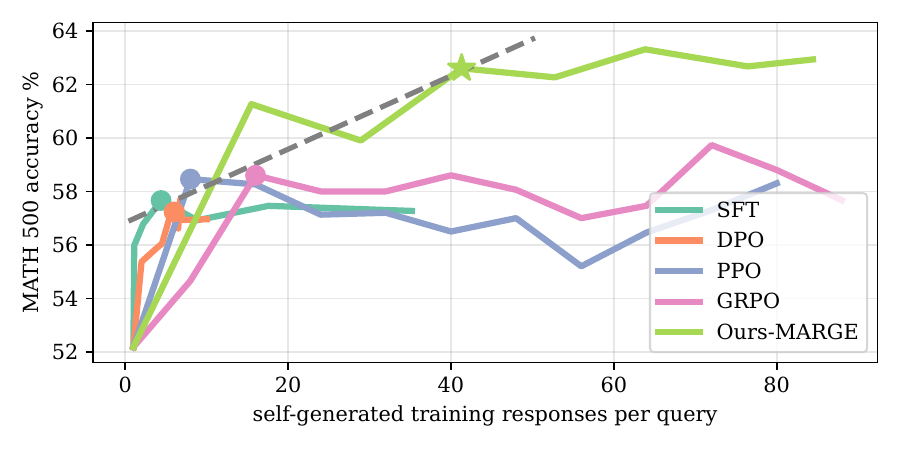}
    \vskip -0.15in
    \caption{The y-axis represents the accuracy on MATH500, and the x-axis represents the number of self-generated responses for training Qwen2-7B-Instruct. %
    The dots and the star show when models start to converge, and the dashed line exhibits their scaling trends as the generation amount for training increases.
    The top-right zone is preferred, as we can easily scale generation and achieve better performance when queries are limited.
    By improving the exploration process, MARGE enables the scaling of self-generated responses for training and improves reasoning ability.
    We discuss more about the scaling trend in Appx.~\ref{Appendix:scaling_trends}.
    }
    \label{fig:enter-label}
    \vskip -0.2in
\end{figure}

Large Language Models (LLMs) have demonstrated remarkable capabilities in text generation and instruction following through post-training techniques such as Reinforcement Learning (RL)~\citep{stiennon2022learningsummarizehumanfeedback,ouyang2022traininglanguagemodelsfollow}.
Despite these advances, improving LLM performance in long-horizon reasoning tasks, particularly mathematical problem solving, remains challenging due to two fundamental challenges: the scarcity of high-quality training data and the inherent difficulty of exploring the vast space of possible reasoning paths.

While Self-training approaches~\citep{zelikman2022star} partially address data scarcity by self-generating abundant training examples, they struggle to scale computation during training to improve reasoning.
This limitation is evident in our observations: model performance initially improves during self-training but quickly plateaus or even declines as training data increases.
These trends, consistent with \citet{setlur2024rl}, indicate spurious correlations between reasoning steps in self-generated datasets.

As explained below, we argue that insufficient exploration causes spurious correlated datasets. %
Our primary research question now becomes: \textbf{How can we enhance exploration across all reasoning stages to generate higher-quality training data, improve credit assignments, and enable scaling of self-generated training pipelines?}

Fig.~\ref{fig:value_estimation} exhibits a challenge in mathematical reasoning and helps explain why insufficient exploration causes spurious correlations: even when a model correctly executes all previous steps, it still might fail the final steps on average.
This highlights the need for reasoning data that accurately assigns credits to later stages without introducing spurious correlations.
As the number of reasoning steps increases, the whole search space grows exponentially.
This growing search space makes it exponentially complex to find this kind of data as the steps get progressively further back (e.g., for DPO, we will need data that only diverges at the particular step but remains the same at previous steps).

To address these limitations, we introduce \textbf{MARGE} (\textbf{Ma}th \textbf{R}easoning with \textbf{G}uided \textbf{E}xploration), a framework that systematically improves mathematical reasoning through guided exploration.
MARGE leverages existing solutions as guidance, completing intermediate reasoning steps to generate diverse, high-quality training datasets.
This approach decomposes the exponentially complex exploration problem into manageable sub-problems while efficiently utilizing computational resources. 
By constructing datasets with common solution prefixes and strategically selecting guidance (correct solutions for complex problems, incorrect ones for simpler cases), MARGE achieves comprehensive coverage across reasoning stages and improves credit assignment--particularly for later steps where traditional approaches often fail.

Our investigation reveals two fundamental insights about guided exploration in mathematical reasoning.
First, iteratively running MARGE demonstrates that strategic guidance improves exploration during self-generation.
It enables scalable improvements across diverse tasks, surpassing the performance plateaus encountered by prior methods.
Second, and more interestingly, unlike traditional post-training methods that trade-off generation diversity (pass@$k$) for single-shot accuracy (pass@$1$)~\citep{wang2024planning}, MARGE mitigates the issue and even might enhance both abilities simultaneously.
This improvement arises from MARGE's systematic coverage of diverse reasoning strategies across all stages, effectively expanding the model's solution repertoire while reinforcing successful reasoning patterns.

Several recent works that try to enhance the reasoning abilities of LLMs, though not explicitly focused on exploration across reasoning stages, share our insights into enforcing exploration at different stages.
\citet{setlur2024rl,xie2024monte,lai2024stepdpostepwisepreferenceoptimization,lu2024stepcontrolleddpoleveragingstepwise} explored step-level supervision, attempting to identify and correct erroneous steps in reasoning.
However, these approaches usually depend on external supervision or heuristics, which may fail to capture the diversity at all reasoning stages compared to MARGE, limiting their scalability.
Monte Carlo Tree Search (MCTS)-based methods~\citep{chen2024steplevelvaluepreferenceoptimization,zhang2024rest,feng2023alphazero,xie2024monte} generate high-quality datasets by rigorously searching and exploring reasoning strategies.
While they focus on finding optimal solutions, MARGE explicitly aims to understand both successful and unsuccessful reasoning patterns and provide a better exploration strategy for LLM reasoning.
Additionally, MARGE operates efficiently by only inferring from the base LLM, eliminating the need for additional value models or searching computation required by MCTS.

We conduct extensive experiments across various base models to demonstrate the effectiveness of MARGE.
When applied to Qwen2-7B-Instruct, it achieves significant performance gains across benchmarks like MATH ($+7.90\%$), GSM8k ($+3.03\%$), CollegeMath ($+13.64\%$), and OlympiadBench ($+5.23\%$), using pure chain-of-thought reasoning.
Furthermore, MARGE improves intrinsic exploration, as evidenced by the higher performance gaps on pass@$64$.

In summary, our contributions are as follows:
\begin{itemize}[itemsep=4pt,topsep=1pt,parsep=0pt,leftmargin=13pt]
    \item We introduce MARGE, a novel framework for guided exploration in LLM mathematical reasoning, laying the foundation for scaling the self-training pipeline;
    \item Leveraging solution-guided exploration, we make it possible to find more high-quality data, resulting in better exploration and credit assignment for complex multi-step mathematic problems;
    \item We demonstrate MARGE's efficacy in improving reasoning accuracy and generation diversity across multiple base models and benchmarks.
\end{itemize}

\section{Related Works}

\subsection{LLM for Mathematical Reasoning}

Previous works have proposed various methods to enhance the long-horizon reasoning capabilities of LLMs, particularly in mathematical problems.
One line of research\citep{wei2023chainofthoughtpromptingelicitsreasoning,chen2023programthoughtspromptingdisentangling} focuses on eliciting the reasoning potential of LLMs with carefully designed prompts without updating the model.
On the other hand, some approaches involve training LLMs with additional data. %
\citet{Lu2024MathGenieGS,gou2024tora,yu2024metamathbootstrapmathematicalquestions,Liu2024AugmentingMW} synthesize new problem-solution pairs to fine-tune a base model.
Furthermore, employing process-level supervision during RL or Best-of-N sampling has been explored \citep{Wang2023MathShepherdVA,jiao2024learningplanningbasedreasoningtrajectories,lightman2023letsverifystepstep,wang-etal-2024-multi-step,luo2024improvemathematicalreasoninglanguage}, which, while effective, often necessitates extensive generation and verification efforts.
Our approach leverages only the policy model to generate datasets, thus simplifying the acquisition process.

Some recent reasoning models like {o1}~\citep{jaech2024openai}, {QwQ-32B-preview}~\citep{qwq-32b-preview}, and deepseek-r1~\citep{deepseek2024r1lite} greatly improve LLM's reasoning capacity through test-time scaling.
Some other methods~\citep{yao2023treethoughtsdeliberateproblem,qi2024mutualreasoningmakessmaller,xiang20252reasoningllmslearning} also discuss how to achieve such test-time scaling through system-2 planning or carefully designed reasoning pipelines, which accomplish similar improvements with smaller general models.
Our method is perpendicular to these works as we address exploration issues and help scale self-generated responses during training.
We also believe that they can be combined with our method in the future to enhance LLM's reasoning ability further.

\subsection{Reinforcement Learning for LLM Reasoning}

Reinforcement Learning (RL) has effectively been applied to align LLMs with human preferences across various tasks, including instruction-following and summarization~\citep{ouyang2022traininglanguagemodelsfollow,stiennon2022learningsummarizehumanfeedback}.
Recent works~\citep{yang2024qwen2,yang2024qwen25mathtechnicalreportmathematical,jiao2024learningplanningbasedreasoningtrajectories} apply these methods to mathematical reasoning tasks.
\citet{xi2024training} introduces reverse curriculum learning into math reasoning.%
\citet{setlur2024rl,lai2024stepdpostepwisepreferenceoptimization,lu2024stepcontrolleddpoleveragingstepwise} generate pairwise preference datasets from intermediate states and employ DPO to train the model. %
\citet{chen2024steplevelvaluepreferenceoptimization,feng2023alphazero,xie2024monte,zhang2024rest} either learn a step-level value function or directly fine-tune LLM through sampling with Monte Carlo Tree Search.
Our method proposes a more efficient and natural exploration process for LLM reasoning, yielding a simple yet effective training pipeline that does not need external supervision or extra models.

\section{Methods}

\begin{figure*}[h]
    \centering
    \includegraphics[width=1\linewidth]{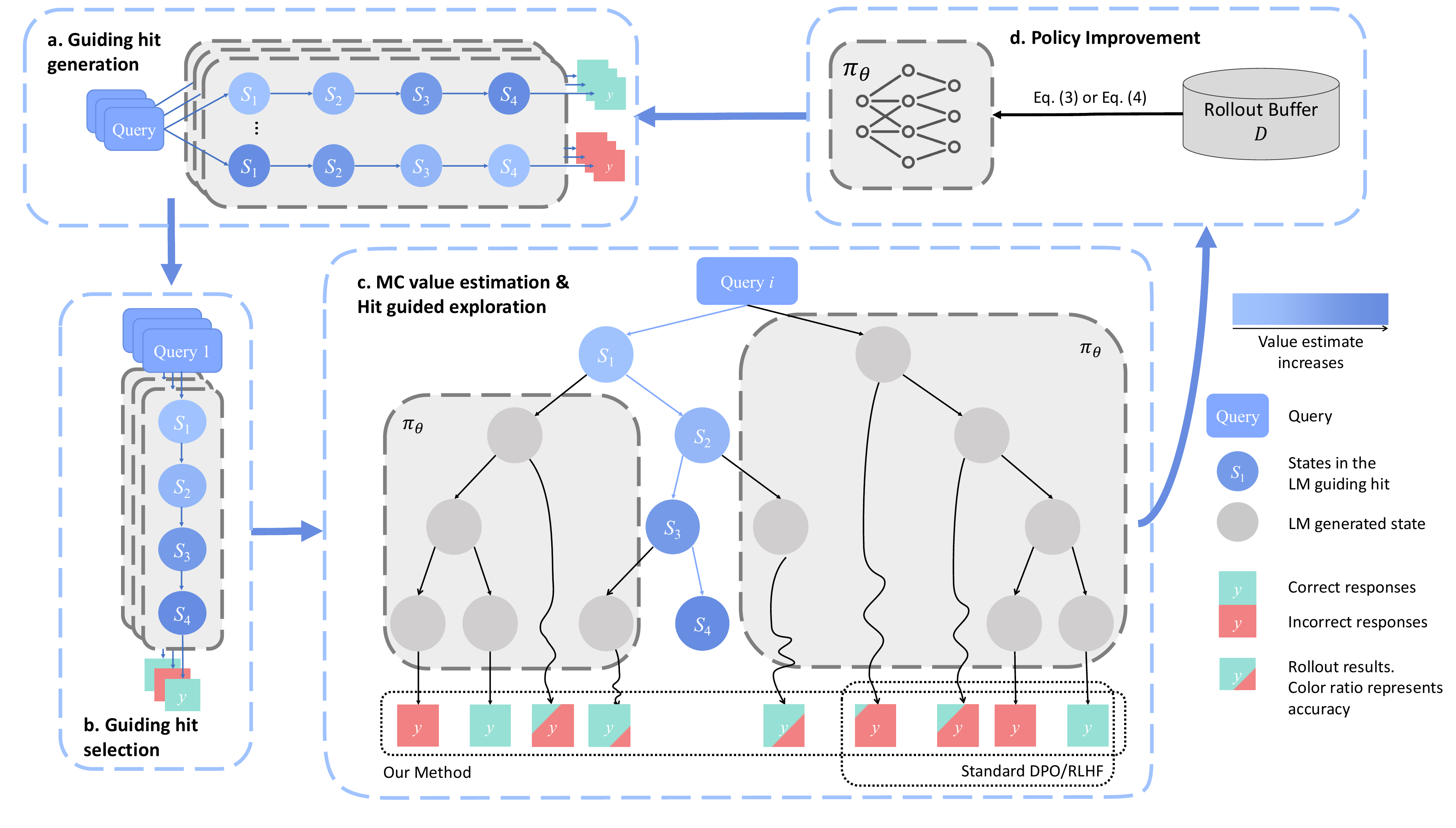}
    \vskip -0.17in
    \caption{Overview of our method MARGE, which includes four stages.
    \textbf{(a)}: Generate multiple responses from the current policy $\pi^{(i)}$ as candidates for guidance and judge their correctness. Starting from the second iteration, we can directly leverage sampled responses from stage (c). 
    \textbf{(b)}: Among all candidate solutions, we select one for each query as guidance according to Sec.~\ref{method:exploration}.
    \textbf{(c)}: Perform a continuation of all states in the guide solution to complete the exploration (Sec.\ref{method:exploration}) and value estimation (Sec.\ref{method:estimation}).
    The collected data is utilized in stage \textbf{(d)} for training and stage \textbf{(a)} in the next iteration as well.
    \textbf{(d)}: Having fully explored the state space in \textbf{(c)}, we first form the rollout buffer, then optimize the current policy, and finally acquire the policy $\pi^{(i+1)}$ for the next iteration as described in Sec.~\ref{method:improvement}.}
    \label{fig:method-main}
    \vskip -0.1in
\end{figure*}

This section introduces our method. %
We first present our problem formulation in Sec.~\ref{method:preliminary}, followed by the three main components of our proposed method in Sec.~\ref{method:estimation}, Sec.~\ref{method:exploration}, and Sec.~\ref{method:improvement}.
The general pipeline of our method is illustrated in Fig.~\ref{fig:method-main} and Algo.~\ref{algorithm:1}.
We enable the policy to cover the solution space better, persistently improving its reasoning ability as the generated responses scale when running MARGE iteratively.
We provide additional theoretical analysis on how MARGE works in Appendix~\ref{Appendix:proof}, examples in Appendix~\ref{Appendix:CaseStudy}, and failure analysis in Appendix~\ref{Appendix:FailureAnalysis}.

\subsection{Problem formulation}
\label{method:preliminary}
\paragraph{Reinforcement Learning for LLMs}
Reinforcement Learning (RL) in the field of LLMs mainly involves two stages: training a reward model and optimizing this model using RL algorithms such as PPO~\citep{schulman2017proximalpolicyoptimizationalgorithms} and REINFORCE~\citep{sutton1999policy}.
An alternative approach, Direct Preference Optimization (DPO) \citep{rafailov2024directpreferenceoptimizationlanguage}, simplifies this process by learning directly from a preference dataset.
Given a dataset that consists of a chosen and a rejected response for all prompts, DPO directly updates the policy $\pi$ with a loss analogous to RLHF.

\paragraph{State-level mathematical reasoning} 

Let $a\oplus b$ represent the concatenation of strings $a$ and $b$.
We conceptualize the reasoning process as a Markov Decision Process (MDP) $\langle S, A, P, r\rangle$, where states $s\in S$ and actions $a\in A$ sampled from policy $\pi_\theta$, and transition dynamics formulated as $s_{t+1} = s_t \oplus a_t$.
Each response $y$ can be decomposed into multiple reasoning steps $y = m_1 \oplus \dots\oplus m_n$, delineated by specific delimiters or simply splitting the response.
An intermediate reasoning state $s_i$, also a prefix of $y$, can then be formulated as $s_i = m_1 \oplus \dots\oplus m_i$, including the first state $s_0$ which is an empty string.
The terminal state $s_n = y$ encompasses the final answer and, therefore not considered an intermediate state.
The reward function $r(x, y) = 1$ if the full response $y$ correctly answers the question $x$.
Correspondingly, the value of state $s_i$, which equals the Q value of $(s_{i-1}, a_{i-1})$, can be computed as
\begin{equation}\label{eq:value_of_s}
Q^\pi(s_{i-1}, a_{i-1}) = V^\pi(s_i) = \mathbb{E}_{\tau\sim\pi(x\oplus s_i)} r(x, s_i \oplus \tau).
\end{equation}

\subsection{Output Reward MC as Value Estimation}
\label{method:estimation}
To estimate the value of an intermediate state $s$, we use the Monte Carlo (MC) simulation to calculate ${V}^\pi(s_i)$ in Eq.~\ref{eq:value_of_s}.
Starting from state $s_i$, we generate $n$ completions from the current policy $\pi$, which we denote as $\tau=m_{i+1}\oplus\dots\oplus m_n \sim \pi(x\oplus s_i)$.
The correctness of these completions is assessed to estimate the true value $V^\pi$ as 
\begin{equation}\label{eq:estimate_of_vs}
    \hat{V}^\pi(s_i) = \frac{1}{n}\sum_{j=1}^n r(x, s_i\oplus\tau_j).
\end{equation}

This MC estimation method has several advantages over training a separate value model, such as a Process Reward Model.
First, employing MC simulation avoids the need for additional model training and simplifies implementation\citep{Wang2023MathShepherdVA,jiao2024learningplanningbasedreasoningtrajectories}.
Besides, MC estimation provides an on-policy estimation, allowing responses to be used in later policy improvement.

\subsection{Hit-guided Exploration}
\label{method:exploration}

In this subsection, we discuss how utilizing the intermediate states of a self-generated hit can improve the exploration strategy of LLMs.
Previous works~\citep{salimans2018learning,florensa2017reverse,xi2024training} on RL and LLM reasoning show that starting from a demonstration state that is close to the terminal can reduce the difficulty of acquiring reward signals.
Inspired by these findings, we propose to sample responses by continuing from all intermediate states of a selected response, which we call hit-guided exploration.
This enables better exploration over the entire reasoning horizon.

\begin{figure}[H]
    \centering
    \includegraphics[width=0.5\linewidth]{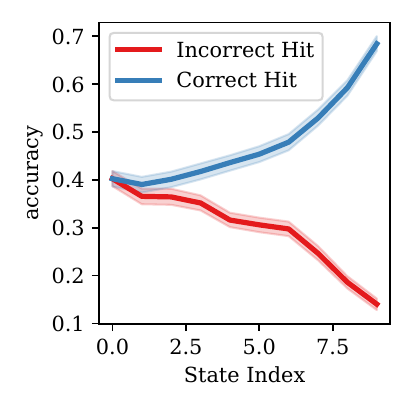}
    \vfill
    \vskip -0.25in
    \caption{Average accuracies when starting from different intermediate states of correct solutions (blue) and incorrect ones (red) with Qwen2-7B-Instruct.
    A larger state index indicates being closer to the end.
    On average, completing from a correct (incorrect) state increases the portion of correct (incorrect) answers, which boosts the exploration of more training data.}
    \label{fig:value_estimation}
    \vspace{-1.5em}
\end{figure}

\paragraph{Hit-Guided Exploration}
We assume a guide solution is available for each question, which can be acquired by sampling from the policy.
Continuing from the solution's intermediate reasoning states, our explored responses naturally share common prefixes.
This pattern implicitly helps assign credits to generated parts since the first steps are shared between completions.
The key differences in correctness can, therefore, be identified in the later steps.
In the meantime, as the first few steps are fixed, the search space is greatly reduced, making exploration much easier.

Furthermore, hit-guided exploration can potentially increase the amount of valid training data.
As illustrated in Fig.~\ref{fig:value_estimation}, the precision of completing an intermediate state increases on average as the state gets closer to the end of a correct response.
This result, also proven in Prop.~\ref{proposition:hit_guided-accuracy}, indicates that hit-guided exploration with a correct response increases the ratio of correct responses among all generations.
The opposite also holds for incorrect ones.
This shows that we can improve the number of desired responses if appropriate solutions are leveraged as guidance.

\paragraph{Hit Selection and Update}
\label{method:exploration_2}
When aligning LLMs, in particular reasoning, datasets that contain both positive and negative responses are preferred.
Such datasets provide rich contrastive information regarding possible reasoning failure and corrections, thereby greatly improving data efficiency~\citep{setlur2024rl}.
Therefore, the guiding hit must be carefully selected to find more paired responses.
To accomplish this, we design a simple yet effective heuristic for selecting the guiding solution.
We refer to queries with an estimated value greater than $0.5$ as easy queries and the others as hard queries.
We then randomly select a correct solution for each hard query and an incorrect one for each easy question.
This solution is based on the simple intuition of increasing the likelihood of finding the right answer to a difficult question and possible failure cases to an easy one.
Although this is a rather simple strategy, we demonstrate with experiments in Sec.~\ref{exp:ablation_exploration} and theoretical analysis in Appendix~\ref{Appendix:proof} to show that, in most cases in reasoning, applying it helps increase the number of positive-negative pairs.

Another key factor is the on-policyness of generated data.
Self-generated responses that are on-policy improve LLMs better as there is no distributional shift between responses and the optimized policy~\citep{lai2024stepdpostepwisepreferenceoptimization}.
Therefore, in each iteration from the first iteration, we update the guidance hit with the latest generation to ensure it is on-policy and that the explored data are equivalent to directly sampled from the latest policy.

\subsection{Iterative Improvement}
\label{method:improvement}

\paragraph{Dataset Composition}
As the exploration process is done on all intermediate states of a response, all explored rollouts can be written as $D=\{(x_i, s_{ij}, y_{ijk}, r_{ijk})\}$, where $y_{ijk}$ means the $k$-th response generated from the $j$-th intermediate state $s_{ij}$ of query $x_i$, whose reward is $r_{ijk}$.
We directly utilize $D$ with all rollouts for RL training.  %
While for DPO training, one correct response $y^+$ and one incorrect response $y^-$ are sampled for each intermediate state to construct a preference dataset $D=\{(x_i, s_{ij}, y^+_{ij}, y^-_{ij})\}$ for DPO.
We filter states with estimated values that are either too high or too low to reduce noise and stabilize DPO training.

\paragraph{Policy training}
Having collected the dataset, we update the current policy $\pi_\theta^{(i)}$ and use it as the reference with RL or DPO.
For DPO, we use the correct response as $y_\text{win}$ and the incorrect one as $y_\text{lose}$ to calculate the DPO loss, which can be formulated as:
\begin{equation}\label{eq:optimization_dpo_loss}
\begin{aligned}
    \mathcal{L} = -\frac{1}{|D|}\sum_{(x, s, y^+, y^-)\in D}  [ & \log\sigma(\beta\log\frac{\pi_\theta(y^+ | x\oplus s)}{\pi_\text{ref}(y^+| x\oplus s)} \\-& \beta\log\frac{\pi_\theta(y^- | x\oplus s)}{\pi_\text{ref}(y^- | x\oplus s)}) ],
\end{aligned}
\end{equation}
where $\sigma$ is the sigmoid function, and $\beta$ is a parameter controlling the discrepancy of $\pi_\theta$.

For RL, we set the reward for the correct responses as $1$ and $0$ for incorrect ones.
In addition to the vanilla REINFORCE loss, we add several modifications to improve training quality.
We first replace the reward for the responses with their group relative advantages, which are calculated as in GRPO~\citep{shao2024deepseekmathpushinglimitsmathematical}.
We also add KL Divergence penalty to stabilize training, resulting in a loss function as
\begin{equation}\label{eq:optimization_reinforce_loss}
   \hspace{-0.5pt}\mathcal{L} = \frac1{|D|}{\sum_{(x, s, y, r)\in D}\left[-\hat{r}\log\pi_\theta(y|x\oplus s)\right] + \beta \text{KL}(\pi_\theta||\pi_\text{ref})},
\end{equation}
where $\hat{r} = \frac{r - \text{mean}(r_i)}{\text{std}(r_i)}$ is calculated using $r_i$ of the responses gathered from the same intermediate state, and $\beta$ is a coefficient that balances policy gradient and the KL penalty.

\section{Experiments}

\begin{table*}[!htbp]
    \vspace{-0.5em}
    \centering
    \caption{Performance (pass@1 and pass@64 accuracy \%) of different algorithms. The best result of each dataset is in \textbf{bold}, the second best one is \uline{underscored}, and our model is marked in \colorbox{backgroundcolor}{blue}. We greedily sample the responses for each query and run them three times to report the average. Evaluation prompts are listed in Appendix~\ref{Appendix:prompts}. Our method outperforms baseline methods on almost all datasets. More interestingly, our method improves pass@64 accuracy by a larger margin, indicating improvement of models' exploration abilities.}
    \label{tab:results}
\vskip 0.01in
\useunder{\uline}{\ul}{}
\resizebox{0.85\textwidth}{!}{
    \begin{tabular}{cccccccc}
      \toprule
      \multirow{2}[4]{*}{Accuracy \%} & \textbf{MATH} & \textbf{MATH500} & \textbf{GSM8k} & \multicolumn{2}{c}{\textbf{College Math}} & \multicolumn{2}{c}{\textbf{OlympiadBench}} \\
\cmidrule{2-8}        & pass@1 & pass@64 & pass@1 & pass@1 & pass@64 & pass@1 & pass@64 \\
      \midrule
      Qwen2-7B-Instruct & 53.18 & 59.92 & 85.67 & 22.13 & 25.13 & 20.65 & 24.98 \\
      \multicolumn{1}{r}{ SFT} & 55.98 & 61.04 & 84.76 & 24.74 & 26.86 & 21.03 & 25.38 \\
      \multicolumn{1}{r}{ DPO} & 57.24 & 63.44 & 85.90 & 31.64 & 35.72 & 20.88 & \uline{25.83} \\
      \multicolumn{1}{r}{ PPO} & 58.70 & 61.98 & 88.47 & \uline{35.72} & 38.44 & 21.82 & 24.54 \\
      \multicolumn{1}{r}{ REINFORCE++} & 59.81 & 63.58 & 88.19 & 35.58 & \uline{38.28} & \uline{24.49} & 25.62 \\
      \multicolumn{1}{r}{ GRPO} & 59.89 & 62.92 & 88.07 & 35.02 & 37.09 & 23.85 & 25.72 \\
      \multicolumn{1}{r}{StepDPO-HF} & 57.78 & 63.54 & 87.90 & 30.92 & 32.36 & 22.91 & 24.19 \\
      \rowcolor[rgb]{ .867,  .922,  .969} MARGE-DPO & \uline{59.92} & \uline{66.84} & \uline{88.60} & 34.68 & 36.58 & 21.48 & 24.69 \\
      \rowcolor[rgb]{ .867,  .922,  .969} MARGE-RL & \textbf{61.08} & \textbf{68.20} & \textbf{88.70} & \textbf{35.77} & \textbf{40.10} & \textbf{25.88} & \textbf{27.31} \\
      \midrule
      MetaMath-Mistral & 28.68 & 34.66 & 75.28 & 17.56 & 21.67 & 7.10 & 12.09 \\
      \multicolumn{1}{r}{ SFT} & 28.60 & 38.28 & 75.94 & 17.31 & 21.72 & 6.67 & 14.07 \\
      \multicolumn{1}{r}{ DPO} & 26.70 & \uline{38.68} & 74.50 & 14.84 & 20.94 & 5.43 & \textbf{15.10} \\
      \multicolumn{1}{r}{ PPO} & 27.78 & 32.54 & 78.11 & 17.81 & 20.90 & 7.06 & 10.56 \\
      \multicolumn{1}{r}{ REINFORCE++} & 30.33 & 34.38 & 78.19 & 18.32 & 20.98 & \uline{7.85} & 9.87 \\
      \multicolumn{1}{r}{ GRPO} & \uline{30.76} & 37.08 & \uline{79.27} & \uline{18.38} & \uline{22.39} & 6.67 & 11.55 \\
      \multicolumn{1}{r}{MCTS-DPO} & 29.92 & 37.44 & 77.53 & 17.85 & 20.84 & 6.57 & 11.68 \\
      \rowcolor[rgb]{ .867,  .922,  .969} MARGE-RL & \textbf{32.13} & \textbf{41.34} & \textbf{81.81} & \textbf{19.76} & \textbf{24.28} & \textbf{8.14} & \uline{14.32} \\
      \midrule
      Llama3.1-8B-Instruct & 49.96 & 70.33 & 85.97 & 28.11 & 37.34 & 16.34 & 34.47 \\
    \multicolumn{1}{r}{ SFT} & 50.72 & 64.96 & 86.37 & \textbf{30.03} & \textbf{39.10} & 16.89 & 34.41 \\
    \multicolumn{1}{r}{ DPO} & 50.36 & \uline{71.54} & 86.68 & 27.39 & 36.77 & 15.75 & \uline{36.29} \\
    \multicolumn{1}{r}{ PPO} & 50.50 & 65.18 & 85.06 & 26.38 & 34.22 & 15.75 & 28.39 \\
    \multicolumn{1}{r}{ REINFORCE++} & \uline{52.27} & 67.14 & \uline{86.93} & 28.72 & 35.84 & \textbf{18.37} & 35.11 \\
    \multicolumn{1}{r}{ GRPO} & 51.22 & 71.00 & 86.58 & 28.04 & 37.82 & 15.41 & 34.37 \\
    \rowcolor[rgb]{ .867,  .922,  .969} MARGE-RL & \textbf{54.23} & \textbf{72.36} & \textbf{88.36} & \uline{28.94} & \uline{38.19} & \uline{17.33} & \textbf{38.61} \\
    \midrule
    \multicolumn{1}{l}{Qwen2.5-7B-Instruct} & 75.30 & 79.62 & 91.89 & 40.41 & \uline{44.48} & 36.00 & 41.77 \\
    \multicolumn{1}{r}{SFT} & 75.17 & 80.33 & 92.27 & \uline{41.09} & 44.39 & 38.12 & 41.23 \\
    \multicolumn{1}{r}{DPO} & 75.03 & 76.97 & 92.06 & 40.57 & \textbf{44.78} & \uline{38.23} & \uline{42.76} \\
    \multicolumn{1}{r}{GRPO} & \uline{76.24} & \uline{81.14} & \uline{92.34} & 40.72 & 43.68 & 38.07 & 40.64 \\
    \rowcolor[rgb]{ .867,  .922,  .969} MARGE-RL & \textbf{76.74} & \textbf{85.16} & \textbf{93.02} & \textbf{41.12} & 44.18 & \textbf{39.70} & \textbf{43.21} \\
      \bottomrule
      \end{tabular}%
      }
      \vspace{-0.5em}
\end{table*}

We design experiments to investigate three key research questions:

\begin{enumerate}[itemsep=4pt,topsep=0pt,parsep=0pt]
    \item Does MARGE improve over baselines, and in which aspects, when controlling data and model parameters? (Sec.~\ref{exp:main_result})
    \item What are the benefits brought by introducing the hit-guided exploration strategy? (Sec.~\ref{exp:ablation_exploration})
    \item How do design choices affect the overall results of MARGE? (Sec.~\ref{exp:other_ablations})
\end{enumerate}

\subsection{Experiment settings}

\paragraph{Models}
We utilize Qwen2-7B-Instruct, Qwen2.5-7B-Instruct~\citep{yang2024qwen2}, LLaMa3.1-8B-Instruct~\citep{llama3modelcard}, and MetaMath-Mistral-7B~\citep{jiang2023mistral,yu2023metamath} as our backbone models to conduct experiments due to their widespread popularity and strong baseline performances.
Additionally, we include Qwen2.5-Math-7B-Instruct~\citep{yang2024qwen25mathtechnicalreportmathematical}, a strong math-specific model.
They cover models of different types and math reasoning capacities, which we believe help demonstrate the effectiveness of our method.

\paragraph{Baselines}
We first compare our method with algorithms applying vanilla exploration.
We include SFT and DPO as our baselines on all models.
Standard RL methods are also included to compare our method with other baselines.
In particular, we compare our method with PPO~\citep{schulman2017proximalpolicyoptimizationalgorithms} and REINFORCE++~\citep{hu2024openrlhf}.
We also include an updated version of REINFORCE that applies all the settings in our RL training, but with vanilla exploration.
We describe it as GRPO in the following sections, as this objective is equivalent to GRPO~\cite{shao2024deepseekmathpushinglimitsmathematical} when the GRPO epoch is $1$.

We also consider other potential exploration methods, including recent works incorporating step-level supervision or MCTS.
We contrast our method with StepDPO~\citep{lai2024stepdpostepwisepreferenceoptimization} on Qwen2-7B-Instruct, which shares a similar idea of finding step-level supervision but through GPT-4 supervision.
On MetaMath-Mistral-7B, we reproduce an MCTS-based exploration method, MCTS-DPO~\citep{xie2024monte}, as a baseline to showcase the efficacy of our exploration strategy over different types of related methods.

Besides controlled experiments above, we include several concurrent works~\citep{cui2025process,acemath2024} that promote LLM reasoning through different perspectives, like a revised multi-stage SFT process.
MARGE enhances the underlying exploration process in RL to enable the scaling of self-generated responses for training.
Therefore, they can potentially be combined, not as counterparts, for future enhancement in LLM reasoning.

\begin{table*}[htbp]
\vspace{-0.5em}
  \caption{
  Performance (average of 3 runs) of MARGE and concurrent works based on Qwen2.5-Math-7B models.
  The best result is in \textbf{bold}, and MARGE is marked in \colorbox{backgroundcolor}{blue}.
  MARGE significantly improves both the pass@$1$ accuracy and exploration ability over Qwen2.5-Math-7B-Instruct.
  It also showcases better or comparable performances to the most advanced works at 7B level.
  }
\vskip 0.01in
  \label{tab:qwen_math_results}%
  \centering
\resizebox{0.87\textwidth}{!}{%
    \begin{tabular}{cccccccc}
    \toprule
    \multirow{2}[4]{*}{Accuracy \%} & \textbf{MATH} & \multicolumn{2}{c}{\textbf{MATH500}} & \multicolumn{2}{c}{\textbf{College MATH}} & \multicolumn{2}{c}{\textbf{OlympiadBench}} \\
\cmidrule{2-8}          & pass@1 & pass@1 & pass@64 & pass@1 & pass@64 & pass@1 & pass@64 \\
    \midrule
    Qwen2.5-Math-7B-Instruct	 & 83.48  & 83.33  & 86.40  & 40.80  & 48.64  & 46.95  & 48.62  \\
    {PPO} & 83.37  & 83.26  & 86.12  & 40.75  & 47.14  & 47.05  & 48.63  \\
    \rowcolor[rgb]{ .867,  .922,  .969} MARGE-RL & \textbf{84.46 } & \textbf{85.04 } & \textbf{89.92 } & 41.58  & 49.49  & 47.40 & 49.02  \\
    \midrule
    ACEMath-7B~\citep{acemath2024} & 83.13  & 83.42  & 85.72  & \textbf{42.76 } & 50.32  & \textbf{48.68}  & 50.45  \\
    PRIME-EURUS-7B~\citep{cui2025process} & 80.08  & 80.70  & 88.58  & 40.99  & \textbf{58.96 } & 48.22  & \textbf{51.97 } \\
    \bottomrule
    \end{tabular}%
}
    \vspace{-0.5em}
\end{table*}%

\paragraph{Datasets}
For training, we start with the same subsets of MetaMathQA~\citep{yu2024metamathbootstrapmathematicalquestions} and AQuA~\citep{ling2017programinductionrationalegeneration} as in StepDPO.
Considering Qwen2.5-7B-Instruct and Qwen2.5-Math-7B-Instruct's already high performance in these tasks, we respectively randomly sample a subset of Omni-Math~\citep{gao2024omnimathuniversalolympiadlevel} and Big-Math\citep{albalak2025bigmathlargescalehighqualitymath}'s training set.

To find a guide solution to each training query in our method, we generate $32$ responses for each query in the training set at the beginning and select one as described in Sec.~\ref{method:exploration}.
We filter the queries for which no suitable response was found, resulting in a curated dataset of approximately 8,500 questions paired with model-generated solutions.
In the following rounds, the guidance solutions are selected similarly but directly from those generated in the previous round without additional sampling.
If no appropriate solution exists for guidance, the guidance solutions for these queries are not updated.

For evaluation, we test our method on two widely adopted benchmarks: MATH~\citep{hendrycksmath2021} and GSM8k~\citep{cobbe2021gsm8k}, which include questions from grade school level to challenging competition problems.
We also incorporate two more challenging datasets, OlympiadBench~\citep{he2024olympiadbench} and CollegeMath~\citep{tang2024mathscalescalinginstructiontuning}, to further test our model's generalizability on out-of-distribution challenging problems.

\paragraph{Implementation}
Our experiments are done on 8 A100-80GB GPUs.
When generating responses during training, we set the temperature at $0.8$ and the top p at $0.95$ to generate diverse responses, which are used to create training data sets or find self-generated solutions.
Specifically, we use the vLLM~\citep{kwon2023efficient} engine to infer the policy model.

For DPO training and baselines, we set $\beta=0.4$ and a global batch size of $256$.
For RL training and baselines, we set the coefficient for KL loss as $0.01$ and sample $8$ responses per state (or query for RL baselines).
For both our methods and baselines, we run for $10$ episodes and apply early-stopping.
More hyperparameters and implementation details are in Appendix~\ref{Appendix:Implementation}.

\subsection{Results}
\label{exp:main_result}

\begin{figure*}[htbp]
    \centering
    \includegraphics[width=\textwidth]{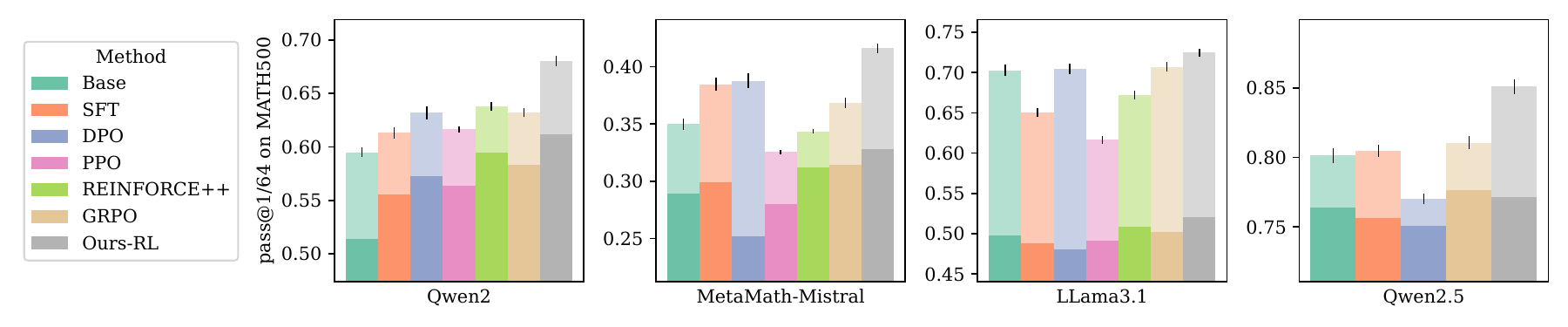}
    \vskip -0.2in
    \caption{Pass@1 (solid) and pass@64 (shaded) of different methods on the MATH500 test set.
    Pass@64 indicates the ability to explore multiple reasoning paths.
    The figure displays the improvement pass@64 over pass@1 in the shaded area, symbolizing the models' ability to explore.
    When MARGE enhances both pass@$1$ and pass@$k$, the performance gap is larger as $k$ grows.
    It demonstrates that, instead of trading off exploration abilities for pass@$1$ improvement, MARGE also enables a better exploration process than baselines and thus fundamentally improves exploration ability.}
    \label{fig:pass@k}
    \vspace{-1em}
\end{figure*}

\paragraph{Metrics}
We evaluate models using two key metrics.
First, we test single-shot Chain-of-Thought (CoT) accuracy (pass@1) across all datasets.
Second, we assess multishot accuracy (pass@$k$) to measure a model's exploration ability.
Pass@$k$ represents the model's precision when given multiple attempts to solve questions, indicating its ability to explore diverse reasoning paths.
We visualize the exploration ability by comparing pass@$k$ improvement over pass@1 in Fig.~\ref{fig:pass@k}.
Due to computational constraints, we conduct pass@$k$ testing on MATH500~\citep{lightman2023letsverifystepstep}, which found effectively represent the complete MATH test set.
We also evaluate pass@$k$ on CollegeMath and OlympiadBench, excluding GSM8k due to its relative simplicity.

\paragraph{Main Results}

Our main results\footnote{For baseline methods, if an open-source model is available, we evaluate it as our model. Otherwise, we reproduce their method on our models with our training queries.} in Tab.~\ref{tab:results} and Tab.~\ref{tab:qwen_math_results} demonstrate consistent performance gains across model architectures and benchmarks.
MARGE also generalizes effectively to more difficult out-of-distribution test sets.
It outperforms its vanilla, externally supervised, and MCTS-based exploration strategy counterparts by a large margin.
As Tab.~\ref{tab:results} shows, MARGE's improvements are relatively more significant on complex tasks like MATH, indicating the necessity of adequately exploring challenging problems that require longer reasoning horizons.

Most notably in Tab.~\ref{tab:results}, Tab.~\ref{tab:qwen_math_results} and Fig.~\ref{fig:pass@k}, when MARGE improves both pass@$1$ and pass@$k$, the performance gap widens as $k$ increases, indicating enhanced exploration capabilities.
This finding is further validated through continued training experiments.
Running REINFORCE++ on the MARGE-trained Qwen2 model improves MATH accuracy to 62.34\%, surpassing the 59.81\% ceiling achieved when starting from the base Instruct model.
This suggests MARGE training enhances both reasoning capabilities and solution diversity, allowing access to previously unexplorable response possibilities.
This improvement suggests that during the MARGE training process, the model not only gains better reasoning capabilities but also develops additional diversity in its solution strategies, accessing previously unexplorable response types for the original model.

\subsection{Ablation Study}
\label{exp:ablation}
To validate the effectiveness of MARGE, we conduct an extensive ablation study to justify key design decisions of our method.
Ablation studies are conducted on the Qwen2-7B-Instruct model with the same training query set.

\begin{figure}[htbp]
    \centering
    \begin{subfigure}{0.49\linewidth}
    \centering
        \includegraphics[width=\linewidth]{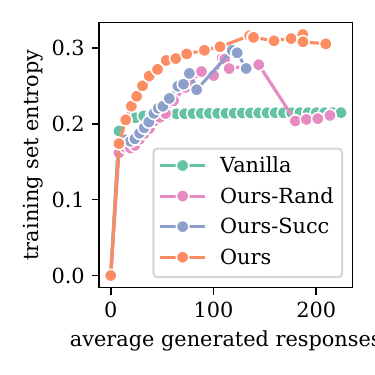}
        \vskip -0.1in
        \caption{}
        \label{fig:exp_data_amount}
    \end{subfigure}
    \hfill
    \begin{subfigure}{0.49\linewidth}
    \centering
        \includegraphics[width=\linewidth]{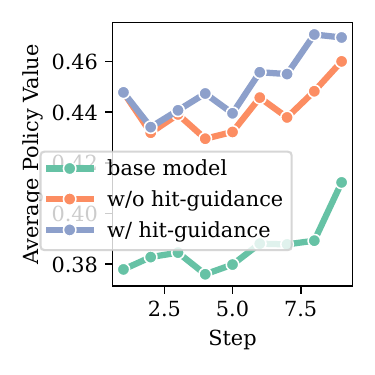}
        \vskip -0.1in
        \caption{}
        \label{fig:exp_credit_assginment}
    \end{subfigure}
    \vskip -0.1in
    \caption{\textbf{{(a)}}: The change of training dataset entropy with number of responses sampled. Our method continues to find new useful pairs when generating more responses. As completing intermediate states does not yield full responses, we convert them based on the number of tokens.
    \textbf{{(b)}}: Hit-guided explored data improves average state value at every reasoning step compared to vanilla exploration, in particular later ones. The values are estimated over $32$ Monte Carlo simulations.}
    \label{fig:4_ablation}
    \vskip -0.15in
\end{figure}

\subsubsection{Benefits of Hit-Guided Exploration}
\label{exp:ablation_exploration}

\paragraph{It enables more training data}
First, we show that our exploration method helps find more effective training data than vanilla exploration.
We measure the number of valid pairs with the entropy of training data.
Higher entropy indicates a larger ratio of possible correct-incorrect pairs among all data, which is beneficial to improve LLM as discussed in Sec.~\ref{method:exploration}.
We show how the entropy changes as the sampled responses grow in Fig.~\ref{fig:exp_data_amount}.
We compare vanilla exploration and hit-guided exploration with different hit selection strategies, including:
\begin{enumerate}[itemsep=2pt,topsep=0pt,parsep=0pt,leftmargin=15pt]
    \item \textit{Ours}: Selecting guidance with the heuristic discussed in Sec.~\ref{method:exploration_2}, where we randomly select correct responses for hard questions and incorrect responses for easy ones.
    \item \textit{Random}: Randomly select a response.
    \item \textit{Succ}: Randomly select a correct response.
\end{enumerate}
The results demonstrate that hit-guided exploration continues to uncover more pairs than vanilla exploration as more responses are generated.
Besides, comparing different guidance selection strategies highlights the importance of selecting appropriate guidance for exploration.
If not, the exploration performance might drop to the vanilla level.

\paragraph{It helps improve on all reasoning steps}
To further demonstrate the benefits of hit-guided exploration, we aim to validate that it improves over baselines at all reasoning steps, particularly the final ones.
We test this hypothesis with two DPO experiments. The \textbf{only} difference is the way training sets are collected: one uses hit-guided exploration, while the other uses vanilla collection.
We estimate the policy state value $V^\pi(s)$ of all intermediate steps from MATH500 collected by Qwen2-7B-Instruct.
Aggregating over queries, we plot the estimated value of different steps in Fig.~\ref{fig:exp_credit_assginment} and note that, while both versions of DPO significantly improve the state value on all steps, DPO with hit guidance data exhibit advantages starting from the third reasoning step.
When applying the same amount of data, our method covers all reasoning horizons and explores more effective data for optimizing LLM, particularly as the steps go backwards.

\subsubsection{Ablation Studies}
\label{exp:other_ablations}

\paragraph{Generation Amount}
Our method enables the discovery of more useful training data on the same set of queries, thus paving the way for successfully scaling self-generation for training.
Therefore, to better control variables, we test how increasing the number of self-generated data for training influences the results of different algorithms.
For SFT and DPO, we increase the number of responses (pairs) for each query; for RL baseline, we increase the number of rollouts per query.
This is done so that the total generated tokens roughly match those in the MARGE training.

\begin{table}[htbp]
\vspace{-1em}
\centering
\caption{Ablation study on the amount of training data.
We include their best performance and the performance when they use the same amount of data as MARGE in Tab.\ref{tab:results}.
We annotate the fraction of data used to achieve the best performance after \texttt{best:}.
Results showcase the importance of exploration when scaling self-generated data.}
\vskip 0.05in
\resizebox{0.48\textwidth}{!}{%
  \begin{tabular}{cccccc}
  \toprule
    &   & \textbf{MATH}  & \textbf{GSM8k} & \begin{tabular}[c]{@{}c@{}}\textbf{College}\\ \textbf{Math}\end{tabular} & \begin{tabular}[c]{@{}c@{}}\textbf{Olympiad}\\ \textbf{Bench}\end{tabular} \\
  \midrule
  \multirow{2}[2]{*}{SFT} & best:1/4 & 57.66 & 86.65 & 24.94 & 22.22 \\
    & same data & 57.36 & 86.02 & 24.12 & 20.74 \\
  \midrule
  \multirow{2}[2]{*}{DPO} & best:1/16 & 57.24 & 85.90 & 31.64 & 20.88 \\
    & same data & 51.38 & 83.77 & 33.01 & 18.66 \\
  \midrule
  \multirow{2}[2]{*}{REINFORCE} & best:2/5 & 59.81 & 88.32 & 35.58 & 24.49 \\
    & same data & 59.79 & 87.94 & 33.43 & 24.15 \\
  \bottomrule
  \end{tabular}%
  }
\label{tab:ablation_generation_amount}%
\vspace{-0.5em}
  \end{table}%

The results in Tab.~\ref{tab:ablation_generation_amount} demonstrate that adding more data to the baseline methods does not constantly improve the reasoning ability.
In contrast, it might even deteriorate as spurious correlations occur.
This further underscores the importance of efficient exploration and better credit assignment and the effectiveness of our method in achieving these goals.

\paragraph{Hit Selection}

\begin{table}[!htp]
    \vspace{-1em}
    \caption{Ablation study on different guidance selection strategies. The results are single-shot accuracy (\%). The results indicate that a better hit selection strategy achieves better exploration and improves the final result, and on-policy solutions play a crucial role in obtaining better exploration.}
    \centering
    \resizebox{0.4\textwidth}{!}{%
    
    \begin{tabular}{@{}ccccc@{}}
    \toprule
    \textbf{Strategies}      & \textbf{MATH}  & \textbf{GSM8k} & \begin{tabular}[c]{@{}c@{}}\textbf{College}\\ \textbf{Math}\end{tabular} & \begin{tabular}[c]{@{}c@{}}\textbf{Olympiad}\\ \textbf{Bench}\end{tabular} \\
    \midrule
    Ours & 61.08 & 88.70  & 35.77 & 25.88 \\
    Random     & 60.21 & 88.6   & 35.06 & 24.96  \\
    Succ       & 59.91 & 87.59 & 35.06 & 23.51 \\
    \textcolor{black}{No Update} & 59.54 & 88.09 & 35.54  & 24.36 
\\
    \bottomrule
    \end{tabular}
    }
    \vspace{-1em}
    \label{tab:ablation_hit}
\end{table}

As demonstrated above, correctly selecting the guidance trajectory helps improve exploration efficiency by finding more valid preference pairs.
Here, we further validate its effects on final performances and compare different hit-selection strategies.
We include another \textit{No Update} off-policy baseline, where we fix the guidance solution once selected the same way as ours in the first round.
We train MARGE with RL and calculate the accuracies of the four datasets as in Tab.~\ref{tab:ablation_hit}.
As we can infer from the table, our designed strategy performs better than other selection strategies.
It can also be inferred that updating guidance helps improve performance, indicating the essentiality of on-policy solutions for exploration and policy improvement.

\section{Computation Cost}
Compared to vanilla exploration methods, MARGE naturally introduces more generations when other parameters are controlled (like number of samples per prompt, training set size, etc.).
It changes the coefficient of time complexity, but not its asymptotic behaviour.
Once the number of intermediate states $n$ is fixed, MARGE incorporates around $(n+1)/2$ times more generation and training amount, as completing intermediate states generally requires fewer tokens.
Statistics show it is approximately 3.3 on Qwen2, Llama3.1, and MetaMath, with around 5 states per query, and 4.9 on Qwen2.5, with around 8 states per query.

\begin{table}[htbp]
    \vspace{-1em}
  \centering
  \caption{
  The results of MARGE and some baselines are shown when MARGE uses less computation, such that the training time is roughly the same.
    }
  \label{tab:computation_cost}%
    \resizebox{0.45\textwidth}{!}{%
    \begin{tabular}{ccccc}
    \toprule
          \textbf{}      & \textbf{MATH}  & \textbf{GSM8k} & \begin{tabular}[c]{@{}c@{}}\textbf{College}\\ \textbf{Math}\end{tabular} & \begin{tabular}[c]{@{}c@{}}\textbf{Olympiad}\\ \textbf{Bench}\end{tabular} \\
    \midrule
    PPO   & 58.70  & 88.47  & 35.72  & 21.82  \\
    REINFORCE & 59.81  & 88.32  & 35.58  & 24.49  \\
    MARGE & 60.67  & 88.10  & 35.81  & 25.28  \\
    \bottomrule
    \end{tabular}%
    }
\end{table}%

Tab.~\ref{tab:computation_cost} presents the results of MARGE and some baselines on Qwen2 when MARGE uses less computation, such that the training time is roughly the same.
MARGE exhibits certain advantages over baselines.
However, as shown in Tab.~\ref{tab:ablation_generation_amount}, with more generations, baseline methods' performances saturate or even deteriorate, while MARGE's improved exploration ability allows it to continue improving.

While our method utilizes more generation computation, it is our goal and contribution to scale up the computation to make the most of the current query set.
High-quality problems are getting harder to acquire.
Therefore, we develop MARGE with a stronger exploration ability to find more high-quality training samples on the same query set.
Possible ways to reduce the computation cost of MARGE exist, like removing unnecessary states from the Monte Carlo estimation.

\section{Conclusion and Future Work}
In this work, we identify and tackle the challenges of enhancing LLM reasoning through self-training.
Our investigation identified ineffective exploration as a critical bottleneck in generating high-quality reasoning data, particularly for complex, multi-step tasks.
To address this, we introduced MARGE, a novel method that systematically leverages self-generated solutions to improve data exploration and credit assignment across reasoning stages.
Extensive experiments and ablations demonstrate that MARGE achieves substantial performance gains.
Moreover, our method surpasses existing baselines by enhancing exploration diversity as well, exhibiting larger gains in pass@$k$ than pass@$1$.
These results underscore the effectiveness of MARGE in improving exploration and scaling self-training pipelines for LLM reasoning.
We discuss the explicit scaling effects of MARGE detailedly in Appendix~\ref{Appendix:scaling_trends}.

While MARGE demonstrates significant improvements, it still has several points for future improvements.
First, MARGE's performance gains converge after adequate iterations (though more than baselines), which is likely due to the deterioration of generation quality during optimization.
Addressing this issue without compromising reasoning gains is an interesting topic for future research.
Second, though validated to be effective for mathematical reasoning, MARGE's exploration strategy is relatively simple and has not been tested in other domains.
Extending it to diverse application tasks could further enhance its impact.

\section*{Impact Statement}
This paper presents work that aims to advance the fields of large language models (LLMs) and Reinforcement Learning (RL), in particular LLM reasoning.
In particular, the developed algorithm makes it easier to explore the large state space and refine credit assignment during LLM reasoning.
However, if our method is misused for inappropriate scenarios, it might cause LLMs to behave and respond unexpectedly.

\clearpage

\nocite{langley00}

\bibliography{example_paper}
\bibliographystyle{icml2025}

\newpage
\appendix
\onecolumn

\section{Algorithm}
\renewcommand{\algorithmicrequire}{\textbf{Input:}}
\renewcommand{\algorithmicensure}{\textbf{Output:}}
\begin{algorithm}[ht!]
    \caption{MARGE}\label{algorithm:1}
    \begin{algorithmic}[1]
    \Require Policy language model $\pi_\theta$; training query set $\mathcal{D}_\mathcal{P}$; number of episodes $M$; query batch size $B$; KL loss coefficient $\beta$; Monte Carlo simulation number $n$; initial responses generation number $n_1$; output reward function $r$.

    \State $\mathcal{D}\leftarrow$generate\_policy($\mathcal{D_P}, \pi_\theta, n_1$) \Comment{Generating $n_1$ hit candidates for all queries}
    
    \For{$j = 1\dots M$}
    \State select\_guidance\_solution($\mathcal{D}$) \Comment{Sec~\ref{method:exploration}: select a guiding solution for each question}
    \For{query batch $\mathcal{D}_i$ from $\mathcal{D}$ of size $B$}
    \State $\mathcal{S}\leftarrow$ get\_states($\mathcal{D}_i$) \Comment{get states of the guidance solutions in $\mathcal{D}_i$}
    \State $\mathcal{A}\leftarrow$ generate\_policy($\mathcal{S}, \pi_\theta, n$) \Comment{Sec~\ref{method:exploration}: generate $n$ completions for all states in $\mathcal{S}$ with policy $\pi_\theta$}
    \State $V\leftarrow$ estimate\_state\_values($\mathcal{S}, \mathcal{A}$) \Comment{Sec~\ref{method:estimation}: estimate state values}

    \State $\pi_\theta'\leftarrow$ train($\pi_\theta, D, V$) \Comment{Sec~\ref{method:improvement}: train policy with objective Eq.~\ref{eq:optimization_dpo_loss} for DPO or Eq.~\ref{eq:optimization_reinforce_loss} for RL}

    \State $\mathcal{D}\leftarrow$ update\_hits($\mathcal{A}$) \Comment{Sec~\ref{method:exploration_2}: update guidance candidates with latest responses}
    \State $\pi_\theta \leftarrow \pi_\theta'$ \Comment{Use the updated policy}
    \EndFor
    \EndFor
    \Ensure Trained policy $\pi_\theta$
    \end{algorithmic}
    
\end{algorithm}

\section{More Implementation details}
\label{Appendix:Implementation}

\subsection{Our method}
Our implementation is based on TRL~\citep{vonwerra2022trl} and DeepSpeed~\citep{rasley2020deepspeed} framework.
We utilize the vLLM~\citep{kwon2023efficient} engine to do inference.

We apply two ways to obtain the intermediate states of a guidance hit.
The first way is to divide the response with special delimiters within the response the models generate, like \texttt{Step i:}.
We leverage special prompts to generate such a pattern, as further discussed in \cref{Appendix:prompts}.
In Qwen2-7B-Instruct, this way results in an average of $4.42$ states per question.
In Qwen2.5-7B-Instruct, more difficult problems are incorporated for training, resulting in an average of $9.08$ states per question.
Another way is to split the response directly based on the number of tokens.
We evenly split $5$ states from the guidance responses for other models.
This is chosen based on the final performance of the models.

When collecting rollouts with hit-guided exploration, we set the following sampling parameters:
temperature as $0.8$, top\_p as $0.95$, top\_k as $-1$.
For each state, $8$ responses are collected.
During RL training, we set the learning rate as $1\times10^{-6}$ and batch size as $1024$ to stabilize training.
We set the coefficient for KL divergence as $0.01$ and train the model with a context length of $2048$.
We train on the collected dataset for $2$ epochs within each iteration.
During DPO training, we set $\beta$ to $0.4$.
We set the learning rate as $5\times10^{-7}$ with a batch size of $256$.
Within each iteration, we train on the collected dataset for $4$ epochs, with a maximum length of $2048$.

\subsection{RL baselines}
We train PPO and REINFORCE++ baselines with OpenRLHF~\citep{hu2024openrlhf}.
Some key parameters are listed in Tab.~\ref{tab:ppo_parameters}.
Some parameters are different from our method as we tuned them for better performance.
For example, the sampling parameters are different as PPO and REINFORCE++ with OpenRLHF fail to converge when the same ones are applied.
We implement GRPO with TRL~\citep{vonwerra2022trl} similar to our method, only differing in the way collecting rollouts.
The parameters of GRPO are the same as our method.

\begin{table}[htbp]
\centering
\caption{
Some key parameters for RL baselines PPO and REINFORCE++.
}
  \begin{tabular}{cc}
  \toprule
    & value \\
  \midrule
  KL penalty coefficient & 0.01 \\
  Samples per prompt & 8 \\
  Actor learning rate & $5\times10^{-7}$ \\
  Critic learning rate & $9\times10^{-6}$ \\
  Discount Factor & 1 \\
  Training batch size & 512 \\
  Rollout batch size & 64(PPO)/128(REINFORCE) \\
  Clip ratio & 0.2 \\
  Maximum length & 2048 \\
  Sampling temperature & 1.0 \\
  Sampling top\_p & 1.0 \\
  \bottomrule
  \end{tabular}%
\label{tab:ppo_parameters}%
\end{table}%

\subsection{DPO}
The training data for DPO are collected in the same way as in our method.
We form preference pairs by randomly selecting correct and incorrect responses collected.
We train DPO baseline with TRL~\citep{vonwerra2022trl}.
We set $\beta=0.4$ and a global batch size of $256$.
The learning rate is $5\times10^{-7}$.

\subsection{SFT}
The training data for SFT are collected in the same way as in our method.
A randomly selected correct response for each query is used to run SFT.
We set the learning rate to $5\times10^{-6}$, batch size to $128$.

\subsection{Other baselines}
\paragraph{StepDPO~\citep{lai2024stepdpostepwisepreferenceoptimization}}
We directly utilize the open-source model provided\footnote{https://huggingface.co/xinlai/Qwen2-7B-Instruct-Step-DPO}.

\paragraph{MCTS-DPO~\citep{xie2024monte}}
We reproduce MCTS-DPO using its official repository\footnote{https://github.com/YuxiXie/MCTS-DPO} on the same training set as ours.
We reproduce it on MetaMath-Mistral-7B as in their paper, with their default parameters.
We did not include it in the Qwen models as we failed to run it successfully after many times of trials.

\section{Theoretical Analysis}
\label{Appendix:proof}
In this section, we aim to provide a theoretical analysis of the foundation and effectiveness of MARGE.

In \cref{proposition:hit_guided-accuracy}, we first demonstrate that using a correct (or incorrect) solution as guidance increases the expectation of acquiring correct (or incorrect) responses, which lays the foundation for the motivation of hit-guided exploration.
This intuitive phenomenon is empirically demonstrated in both previous works in RL~\citep{florensa2017reverse,salimans2018learning} and in \cref{fig:value_estimation}.
Here, we view such empirical conclusions from the viewpoint of theory.

\begin{proposition}
\label{proposition:hit_guided-accuracy}
    Suppose $S_1\oplus \dots\oplus S_n$ is a generated response.
    $R(q, S_1\oplus \dots\oplus S_n)$ is the reward function that gives $1$ if and only if $S_1\oplus \dots\oplus S_n$ is a correct solution to $q$; otherwise $R=0$.
    Under the condition that it is a randomly sampled correct response, we have:
    \begin{equation*}
        \mathbb{E}[R] \leq \mathbb{E}[R|S_1 \textrm{ is from a correct response}] \leq  \cdots\leq\mathbb{E}[R|S_1, \dots, S_n \textrm{ are from a correct response}].
    \end{equation*}
    If it is an incorrect response, then:
    \begin{equation*}
        \mathbb{E}[R] \geq \mathbb{E}[R|S_1 \textrm{ is from an incorrect response}] \geq \cdots\geq\mathbb{E}[R|S_1, \dots, S_n \textrm{ are from an incorrect response}].
    \end{equation*}
\end{proposition}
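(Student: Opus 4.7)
The plan is to reformulate each inequality using the state-value function $V^\pi(s) := \mathbb{E}_{\tau \sim \pi(\cdot \mid s)}[R(s \oplus \tau)]$ from Eq.~\eqref{eq:value_of_s}, so that the entire chain reduces to a single monotonicity statement about the moments of a Doob martingale. The operational meaning I would adopt for ``$S_1,\ldots,S_k$ are from a correct response'' is the one implicit in MARGE: sample a full correct response $Y^g \sim \pi(\cdot \mid R=1)$, take its first $k$ steps as a fixed prefix, and then continue by sampling the remaining steps from $\pi$. Pinning down this semantic is the first---and, I suspect, the main---obstacle, because the proposition is phrased loosely and the argument is sensitive to whether one is conditioning on the observed response being correct versus using a correct sample as guidance.

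First, I would record the key Doob-martingale observation: for $S_{1:n} \sim \pi$ and $M_k := V^\pi(S_{1:k})$, the Markov structure of autoregressive sampling gives $\mathbb{E}[M_{k+1} \mid S_{1:k}] = M_k$ with terminal value $M_n = R$. Two consequences fall out immediately: $\mathbb{E}[M_k] = \mathbb{E}[R]$ for every $k$, and $k \mapsto \mathbb{E}[M_k^2]$ is nondecreasing by conditional Jensen applied to $x \mapsto x^2$ (equivalently, by the variance decomposition $\mathbb{E}[M_{k+1}^2] = \mathbb{E}[M_k^2] + \mathbb{E}[(M_{k+1}-M_k)^2]$).

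Second, I would convert the conditional expectations into reweighted averages. Since $\pi(s_{1:k} \mid R=1) = \pi(s_{1:k}) V^\pi(s_{1:k})/P(R=1)$ and the continuation from a fixed prefix contributes exactly $V^\pi(s_{1:k})$ to the reward in expectation by definition, I obtain
\[
\mathbb{E}\bigl[R \,\big|\, S_{1:k}\text{ from correct}\bigr] \;=\; \frac{\mathbb{E}_\pi\bigl[V^\pi(S_{1:k})^2\bigr]}{P(R=1)} \;=\; \frac{\mathbb{E}[M_k^2]}{\mathbb{E}[M_k]}.
\]
A symmetric computation, using $\pi(s_{1:k} \mid R=0) \propto \pi(s_{1:k})\bigl(1 - V^\pi(s_{1:k})\bigr)$, yields $\mathbb{E}[R \mid S_{1:k}\text{ from incorrect}] = \bigl(P(R=1) - \mathbb{E}[M_k^2]\bigr)/P(R=0)$.

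Finally, I would close the two chains. In the correct case, the opening inequality $\mathbb{E}[R] \leq \mathbb{E}[M_k^2]/\mathbb{E}[M_k]$ is Cauchy--Schwarz applied to $M_k$ (equivalently $\mathrm{Var}(M_k) \geq 0$), and each subsequent step reduces to $\mathbb{E}[M_k^2] \leq \mathbb{E}[M_{k+1}^2]$ since the denominators are the common constant $P(R=1)$; this is exactly the martingale monotonicity recorded above, and the terminal value $\mathbb{E}[M_n^2]/\mathbb{E}[M_n] = 1$ provides a sanity check. The incorrect chain is handled identically: the denominator is now the constant $P(R=0)$ and the minus sign on the $\mathbb{E}[M_k^2]$ term flips the direction, so the same monotonicity delivers a decreasing sequence, bounded above by $\mathbb{E}[R]$ through the same Cauchy--Schwarz step.
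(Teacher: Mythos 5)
Your proposal is correct and follows essentially the same route as the paper's proof: both use Bayes' rule to rewrite $\mathbb{E}[R \mid S_{1:k}\text{ from correct}]$ as the ratio $\mathbb{E}[V^\pi(S_{1:k})^2]/P(R=1)$ and then invoke nonnegativity of (conditional) variance, with the incorrect case handled by the symmetric reweighting under $1-V^\pi$ (the paper's $R'=1-R$ substitution). Your packaging via the Doob martingale $M_k = V^\pi(S_{1:k})$ and the monotonicity of $k \mapsto \mathbb{E}[M_k^2]$ is a clean way to make explicit the induction step that the paper dispatches with ``similarly,'' but it is the same underlying argument.
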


\begin{proof}
    Let's start with $\mathbb{E}[R] \leq \mathbb{E}[R|S_1]$ for correct responses.
    As in language generation, the process is of a discrete setting.
    What's more, as $S_1\oplus \dots\oplus S_n$ is randomly sampled from all correct responses, with a little bit of notation abusing for abbreviation, we can denote $\mathbb{E}[R|S_1 \textrm{ is from a correct response}]$ as $ \mathbb{E}[R|S_0] = \sum_{S_1} P(R=1|S_1)P(S_1 | R=1)$.
    Consider $\mathbb{E}[R|S_1] - \mathbb{E}[R]$, by Bayes' theorem, we have:
    \begin{equation}
        \begin{aligned}
        & \mathbb{E}[R|S_1] - \mathbb{E}[R] \\
        = & \sum_{S_1} P(R=1|S_1)P(S_1 | R=1) - \sum_{S_1} P(R=1 | S_1)P(S_1) \\
        = & \sum_{S_1}P(R=1|S_1)\frac{P(R=1|S_1)P(S_1)}{P(R=1)} - \sum_{S_1}P(R=1|S_1)P(S_1) \\
        = & \frac{\sum_{S_1} P(R=1|S_1)^2P(S_1) - \left(\sum_{S_1} P(R=1|S_1)P(S_1)\right)^2}{P(R=1)} \\
        = & \frac{\mathrm{Var}(P(R=1|S_1))}{P(R=1)} \geq 0,
        \end{aligned}
    \end{equation}
    so $\mathbb{E}[R|S_1] \geq \mathbb{E}[R]$.
    Similarly, we have $\mathbb{E}[R|S_1, \dots, S_j]\geq \mathbb{E}[R|S_1, \dots, S_{j-1}]$.
    Therefore, for a correct response,
    \begin{equation}
        \mathbb{E}[R] \leq \mathbb{E}[R|S_1] \leq \cdots\leq\mathbb{E}[R|S_1, \dots, S_n].
    \end{equation}

    We denote a new reward function $R'$ for incorrect ones, such that $R'=1-R$ on all queries and responses.
    Therefore, similar to the deduction above, we have:
    \[
        \mathbb{E}[R'|S_1] \geq \mathbb{E}[R'].
    \]
    Subsituting back $R=1-R'$, we will have $\mathbb{E}[R|S_1] \leq \mathbb{E}[R]$.
    Iterating such a process, we will have
    \begin{equation}
        \mathbb{E}[R] \geq \mathbb{E}[R|S_1] \geq  \cdots\geq\mathbb{E}[R|S_1, \dots, S_n]
    \end{equation}
    for incorrect guidance solutions.
\end{proof}

In the following part, we aim to showcase the effectiveness in exploring more data of the hit-selection strategy in \cref{method:exploration}.
First, under the hypothesis that the change of $\mathbb{E}[R|S_1, \dots, S_i]$ in \cref{proposition:hit_guided-accuracy} is linear to $i$, we show a sufficient condition for exploring more correct and incorrect pairs within the same number of responses in \cref{proposition:exploration_requirements}.
This linearity hypothesis is generally reasonable, shown by the empirical results in \cref{fig:value_estimation} and \citet{xi2024training}.
The statistics in \cref{fig:value_estimation} show general compliance with the conditions in \cref{proposition:exploration_requirements}.
The results in \cref{fig:exp_data_amount} that directly reflect the dataset's pairs further validate the reasonableness of the hypothesis and the effectiveness of our method.

\begin{proposition}
\label{proposition:exploration_requirements}
    Suppose $S_1\oplus \dots\oplus S_n$ is the generated response sampled for hit-guided exploration, and $k$ is an intermediate state.
    Suppose $\Delta = |\mathbb{E}[R|S_1,\dots,S_j] - \mathbb{E}[R|S_1,\dots,S_{j-1}]|$ is constant as $j$ changes.
    Let $\sharp_\text{Hit-Guided}$ be the number of valid correct-incorrect pairs introduced by hit-guided exploration, and $\sharp_\text{Vanilla}$ be the number of valid correct-incorrect pairs introduced by vanilla exploration.
    Then
    \begin{equation*}
        2k(k+1)\geq n(n+1),
        \quad k=\lfloor\frac{|1-2\mathbb{E}[R]|}{2{\Delta}}\rfloor,
    \end{equation*}
    is a sufficient condition that hit-guided exploration introduces more valid pairs than vanilla exploration, i.e.
    \begin{equation}
        \sharp_\text{Hit-Guided} \geq \sharp_\text{Vanilla}.
    \end{equation}
\end{proposition}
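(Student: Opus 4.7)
My plan is to unpack the linearity hypothesis together with the hit-selection rule from Sec.~\ref{method:exploration_2} to parameterize all the conditional values, express the gain of hit-guided exploration as an explicit polynomial sum in $j$, and then show that the triangular-number inequality $2k(k+1) \geq n(n+1)$ forces this sum to be non-negative. Concretely, by Proposition~\ref{proposition:hit_guided-accuracy} and the constant-$\Delta$ hypothesis, I would write $p_j := \mathbb{E}[R \mid S_1, \dots, S_j] = p_0 + \epsilon\, j\, \Delta$, where $\epsilon = +1$ when the guide is correct (used for hard queries with $p_0 \leq 1/2$) and $\epsilon = -1$ when it is incorrect (used for easy queries with $p_0 \geq 1/2$). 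In either branch, the hit-selection rule ensures $p_j$ moves toward $1/2$. Setting $\delta := |1 - 2 p_0|$ gives the unified identity $(p_j - 1/2)^2 = (\delta/2 - j\Delta)^2$, and the definition $k = \lfloor \delta/(2\Delta) \rfloor$ yields $\delta \geq 2k\Delta$.

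For the counting, I would observe that the expected number of valid correct-incorrect pairs generated at a state with value $p$ is proportional to $p(1-p)$, since a pair is valid exactly when one sample is correct and the other is not. Vanilla exploration only samples from $s_0$, contributing a baseline proportional to $p_0(1-p_0)$, while hit-guided exploration samples from each intermediate state $s_j$, contributing $p_j(1-p_j)$ at index $j$. Using $p(1-p) = 1/4 - (p-1/2)^2$ together with the parameterization above, the difference telescopes cleanly into
\[
\sharp_\text{Hit-Guided} - \sharp_\text{Vanilla} \;\propto\; \sum_{j=1}^n \bigl[\,p_j(1-p_j) - p_0(1-p_0)\,\bigr] \;=\; \Delta\sum_{j=1}^n j\,(\delta - j\Delta),
\]
so that a sufficient condition for hit-guided to dominate vanilla is $\delta \cdot \sum_{j=1}^n j \geq \Delta \cdot \sum_{j=1}^n j^2$.

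To convert this into the stated form, I would substitute the bound $\delta \geq 2k\Delta$ and then pair up indices so that the right-hand side collapses into $k(k+1) = 2\sum_{j=1}^k j$, noting that the inequality $\sum_{j=1}^n j \leq k(k+1)$ is exactly $2k(k+1) \geq n(n+1)$. The main obstacle will be choosing this coarse pairing correctly: a direct evaluation of $\Delta\sum_j j(\delta - j\Delta)$ already yields a sufficient condition that is weaker than the one stated (roughly $n \lesssim 3k$ versus the stated $n \lesssim k\sqrt{2}$), so to land on the stated triangular-number form one must match each potential overshoot index $j > k$ with a pre-threshold index $j' \leq k$ via a rearrangement or Cauchy--Schwarz step, and show that the positive contribution of the pre-threshold partner compensates any negative contribution from the overshoot region. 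Pinning down this pairing so that the inequality collapses exactly into $2k(k+1) \geq n(n+1)$ is the key technical step; once in place, the rest is a routine algebraic verification.
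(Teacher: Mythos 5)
Your route diverges from the paper's in one substantive way: the counting convention for $\sharp$. The paper counts, at each state with value $p_i$, the number of \emph{disjoint} correct--incorrect pairs that can be formed from $m$ samples, i.e.\ $m\min(p_i,1-p_i)$, so that $\sharp_\text{Hit-Guided}=m\sum_{i=0}^{n}\min(p_i,1-p_i)$ versus $\sharp_\text{Vanilla}=(n+1)\,m\min(p_0,1-p_0)$; the triangular-number condition then falls out of summing $i\Delta$ up to the crossing index $k$ and $(1-2p_0)-i\Delta$ beyond it, and discarding the non-negative term $(n-k)(1-2p_0)$. You instead count all cross pairs, proportional to $p_i(1-p_i)$. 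That is a defensible formalization and your algebra under it is correct --- the difference does reduce to $\Delta\sum_{j=1}^{n}j(\delta-j\Delta)$ --- but it is not the quantity the paper calls $\sharp$, and it leads to a quantitatively different (strictly weaker) requirement, roughly $6k\ge 2n+1$ rather than $2k(k+1)\ge n(n+1)$. So at best you would be proving the stated condition is sufficient for a \emph{different} pair count; to match the paper you would need to replace $p(1-p)$ by $\min(p,1-p)$, at which point the piecewise-linear sum gives the stated bound directly.

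Within your own model there is also an unfinished step that you mischaracterize. No rearrangement, pairing of overshoot indices, or Cauchy--Schwarz argument is needed: your sufficient condition $\delta\sum_{j=1}^n j\ge\Delta\sum_{j=1}^n j^2$ is equivalent (for $n\ge1$) to $\delta\ge\Delta(2n+1)/3$, and combining $\delta\ge 2k\Delta$ (from the floor) with $n(n+1)\le 2k(k+1)$ gives $2n+1<\sqrt{2}\,(2k+1)\le 6k$ for all $k\ge1$, which closes the argument by a one-line numerical check. Your worry that the sum must ``collapse exactly into $2k(k+1)\ge n(n+1)$'' is misplaced --- a sufficient condition only needs to be \emph{implied by} the stated one, not equal to it --- and leaving this as ``the key technical step'' means the proposal as written does not actually complete the proof. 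Finally, note the paper's proof also handles the misaligned branches (incorrect guide on a hard query, correct guide on an easy query) to justify the selection rule; your $\epsilon$-parameterization silently assumes the aligned branches, which is fine for the proposition itself but worth stating explicitly.
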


\begin{proof}
    Suppose we sample $m$ responses in each state.
    As we only consider responses that share the same prefix, then the total number of valid pairs is
    \[
        \sharp_\text{Hit-Guided} = \min(\mathbb{E}[R], 1-\mathbb{E}[R])m + \sum_{i=1}^n\min(\mathbb{E}[R|S_1,\dots,S_i], 1-\mathbb{E}[R|S_1,\dots,S_i])m.
    \]
    For vanilla exploration, within $m(n+1)$ responses, the number of valid pairs is
    \[
        \sharp_\text{Vanilla} = (n+1)\min(\mathbb{E}[R], 1-\mathbb{E}[R])m.
    \]
    Let's first consider the $\mathbb{E}[R]\leq \frac{1}{2}$ case, where $\sharp_\text{Vanilla} = (n+1)m\mathbb{E}[R]$.

    When $S_1\oplus \dots\oplus S_n$ is an incorrect response, as in \cref{proposition:hit_guided-accuracy}, $\mathbb{E}[R|S_1,\dots,S_i]\leq\mathbb{E}[R]$,
    \[\sum_{i=1}^n\min(\mathbb{E}[R|S_1,\dots,S_i], 1-\mathbb{E}[R|S_1,\dots,S_i])m = \sum_{i=1}^n\mathbb{E}[R|S_1,\dots,S_i]m \leq n\mathbb{E}[R]m.\]
    Therefore, using an incorrect response for hard queries with an estimated value smaller than $0.5$ cannot improve exploration.

    On the other hand, let's consider the case when $S_1\oplus \dots\oplus S_n$ is a correct response.
    Since $\Delta$ is the slope for linear improvement in our hypothesis, by \cref{proposition:hit_guided-accuracy}, for arbitrary $i$, we have 
    \begin{equation}\label{eq:prop2_1}
        p_i = \mathbb{E}[R|S_1,\dots,S_i]=\mathbb{E}[R]+i\Delta.
    \end{equation}
    In addition, we denote $p_0 = \mathbb{E}[R]$ for clearity.
    Consider $k = \lfloor\frac{|1-2\mathbb{E}[R]|}{2{\Delta}}\rfloor$.
    Then for all $i\leq k$, $\mathbb{E}[R|S_1,\dots,S_i] \leq \frac{1}{2}$, and for all $i>k$, $\mathbb{E}[R|S_1,\dots,S_i] \geq \frac{1}{2}$, and
    \begin{equation*}
        \sharp_\text{Hit-Guided} = m\sum_{i=0}^kp_i + m\sum_{i=k}^n(1-p_i).
    \end{equation*}
    To make $\sharp_\text{Hit-Guided}\geq \sharp_\text{Vanilla}$, we need to ensure that
    \begin{equation}\label{eq:prop2_2}
        \sharp_\text{Hit-Guided} - \sharp_\text{Vanilla} = m\sum_{i=0}^kp_i + m\sum_{i=k}^n(1-p_i) - (n+1)mp_0 \geq 0.
    \end{equation}
    Substitute \cref{eq:prop2_1} into \cref{eq:prop2_2}, we have
    \begin{equation}\label{eq:prop2_3}
        \sharp_\text{Hit-Guided} - \sharp_\text{Vanilla} = m\sum_{i=1}^ki\Delta + m\sum_{i=k}^n(1-2p_0-i\Delta).
    \end{equation}
    Therefore, we need to ensure that
    \begin{equation}\label{eq:prop2_4}
        \begin{aligned}
              &\sum_{i=1}^ki\Delta + \sum_{i=k+1}^n(1-2p_0-i\Delta) \\
            = &\frac{k(k+1)}{2}\Delta - \frac{(k+1+n)(n-k)}{2}\Delta + (n-k)(1-2p_0)\geq 0.
        \end{aligned}
    \end{equation}
    As $p_0\leq\frac{1}{2}, k\leq n$ simplifying \cref{eq:prop2_4}:
    \begin{equation*}
        \begin{aligned}
            &\frac{k(k+1)}{2}\Delta - \frac{(k+1+n)(n-k)}{2}\Delta + (n-k)(1-2p_0) \\
            = & (k^2 + k - \frac{n^2 + n}{2})\Delta + (n-k)(1-2p_0) \\
            \ge & (k^2 + k - \frac{n^2 + n}{2})\Delta \\
        \end{aligned}
    \end{equation*}
    Therefore, a sufficient condition for the exploration to be effective is
    \begin{equation}\label{eq:prop2_conclusion}
        \begin{aligned}
            (k^2 + k - \frac{n^2 + n}{2})\Delta\geq 0 \\
            n(n+1) \leq 2k(k+1).
        \end{aligned}
    \end{equation}

    In the $\mathbb{E}[R] > \frac{1}{2}$ case, the same result can be obtained similarly.
    When $S_1\oplus \dots\oplus S_n$ is an correct response, we have $p_n\geq \dots \geq p_1\geq p_0\geq \frac{1}{2}$.
    Then $\sharp_\text{Hit-Guided} = m\sum_{i=0}^n (1 - p_i) \leq m(n+1)(1-p_0)=\sharp_{\text{Vanilla}}$, indicating that the hit-guided strategy is worse than the vanilla strategy when using a correct guidance.

    When $S_1\oplus \dots\oplus S_n$ is an incorrect response, under the linear hypothesis, we have $p_i = p_0 - i\Delta$.
    Considering the same $k$, we have for $i\leq k$, $p_i\geq \frac{1}{2}$, and for $i>k$, $p_i < \frac{1}{2}$.
    To make $\sharp_\text{Hit-Guided}\geq \sharp_\text{Vanilla}$, we need to ensure that
    \begin{equation}\label{eq:prop2_5}
        \begin{aligned}
            &\sharp_\text{Hit-Guided} - \sharp_\text{Vanilla} \\
            = & m\sum_{i=0}^k(1 - p_i) + m\sum_{i=k}^np_i - (n+1)m(1 - p_0) \\
            = & m\sum_{i=0}^ki\Delta + m\sum_{i=k}^{n}(2p_0-1-i\Delta) \geq 0.
        \end{aligned}
    \end{equation}
    \cref{eq:prop2_5} is anologous to \cref{eq:prop2_4}, only differing in the sign of $1-2p_0$.
    Therefore, the condition in \cref{eq:prop2_conclusion} is also sufficient in the case where $p_0 > \frac{1}{2}$.
    Therefore, it is a sufficient condition for the hit-selection strategy in \cref{method:exploration} to find more valid pairs.
\end{proof}

In the last part, we view the benefits of MARGE's exploration strategy from the viewpoint of policy optimization.
We aim to show in \cref{proposition:variance} that, when using sampled self-generated solutions as guidance, the variance of the policy gradient estimation decreases in expectation, resulting in improved policy optimization.
Our analysis is based on REINFORCE for simplicity, but similar results may also be found with other optimization algorithms.

\begin{proposition}
\label{proposition:variance}
Suppose $Y_0 = s_1\oplus s_2\oplus\cdots \oplus s_M$ are self-generated guidance solutions for all queries $q\in\mathcal{Q}$.
Let $\hat{g}[\theta]$ denote the MARGE policy gradient estimator from the objective in \cref{eq:optimization_reinforce_loss}, conditioned on guidance solution $Y_0$; Let $\Tilde{g}[\theta]$ denote the policy gradient estimator of vanilla exploration.
When the number of rollouts $\tau$ is the same, the variance of the MARGE gradient estimation is lower in expectation, i.e.
\begin{equation}
    \mathbb{E}_{Y_0}\text{Var}_{\tau|Y_0}[\hat{g}|Y_0] \leq \text{Var}_{\tau}[\Tilde{g}].
\end{equation}
\end{proposition}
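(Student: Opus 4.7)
The plan is to prove this via the law of total variance together with a conditional Monte Carlo / Rao--Blackwell argument. First I would establish that both $\hat{g}[\theta]$ and $\Tilde{g}[\theta]$ are unbiased estimators of the true policy gradient $\nabla J(\theta)$. For $\Tilde{g}$ this is the standard REINFORCE identity. For $\hat{g}$ the key fact is that each intermediate state $s_i$ of $Y_0 \sim \pi_\theta$ is marginally distributed as an on-policy prefix, so continuations drawn from $\pi_\theta(\cdot\mid q\oplus s_i)$ reproduce the on-policy trajectory distribution once the expectation over $Y_0$ is taken; the group-relative advantage $\hat r = (r - \mathrm{mean}(r_i))/\mathrm{std}(r_i)$ preserves unbiasedness because its per-state centring is a valid baseline under the score-function identity $\mathbb{E}_\tau[\nabla\log\pi_\theta(\tau\mid q\oplus s_i)] = 0$.

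Next I would apply the law of total variance to the vanilla estimator under a coupling in which $\Tilde{g}$ is built from rollouts drawn from the same policy that generates $Y_0$, giving
\[
    \mathrm{Var}_\tau[\Tilde{g}] \;=\; \mathbb{E}_{Y_0}\bigl[\mathrm{Var}_{\tau\mid Y_0}[\Tilde{g}\mid Y_0]\bigr] \;+\; \mathrm{Var}_{Y_0}\bigl[\mathbb{E}_{\tau\mid Y_0}[\Tilde{g}\mid Y_0]\bigr].
\]
Since the second term is non-negative, it suffices to show
$\mathbb{E}_{Y_0}\mathrm{Var}_{\tau\mid Y_0}[\hat{g}\mid Y_0] \le \mathbb{E}_{Y_0}\mathrm{Var}_{\tau\mid Y_0}[\Tilde{g}\mid Y_0]$, reducing the statement to a conditional comparison at fixed $Y_0$.

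I would then obtain this conditional inequality by viewing MARGE as a stratified sampling plus control-variate refinement of vanilla exploration: the fixed rollout budget is partitioned across the intermediate states $\{s_i\}$ of $Y_0$, and within each stratum the empirical group mean $\mathrm{mean}(r_i)$ is subtracted as a baseline. Stratification preserves unbiasedness and replaces the pooled variance of $\Tilde{g}\mid Y_0$ by an average of within-stratum variances, which is no larger by the standard stratified-sampling decomposition. Subtracting the per-state empirical baseline then further reduces variance along each stratum, since the optimal constant baseline for a REINFORCE-style score-function estimator at state $s_i$ is the conditional reward mean at $s_i$, and the empirical mean is its consistent estimator. Summing the per-stratum bounds and averaging over $Y_0$ yields the required inequality.

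The main obstacle is twofold. First, one must formalise the coupling cleanly so that $\hat{g}$ and $\Tilde{g}$ can legitimately be compared conditional on the same $Y_0$, even though the two estimators draw from different conditional distributions (from $q\oplus s_i$ versus from $q$ alone); this requires carefully matching rollout counts across strata so that the ``same number of rollouts'' hypothesis translates into a valid variance-of-sum decomposition. Second, the finite-sample normaliser $\mathrm{std}(r_i)$ and the leave-in group mean both induce intra-group correlations that complicate the otherwise clean Rao--Blackwell picture; I would address this either by treating $\mathrm{std}(r_i)$ as an approximately deterministic constant in the large-group regime or by invoking a leave-one-out variant of the baseline to restore exact unbiasedness and the textbook variance-reduction argument.
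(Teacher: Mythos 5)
Your core mechanism is the same as the paper's: the inequality comes from the law of total variance (a Rao--Blackwell-type conditioning argument) applied to the prefix $s_1\oplus\cdots\oplus s_j$ of $Y_0$, combined with the observation that, because $Y_0$ is itself sampled from $\pi_\theta$, its $j$-step prefixes are marginally distributed exactly as the $j$-step prefixes of a fresh on-policy rollout, so the conditional variances can be averaged over $Y_0$ and compared against the unconditional one. The paper executes this per sample: it defines $\mathcal{G}^{(j)}|_{Y_0}$ as the single-rollout gradient contribution from state $j$, shows $\mathbb{E}\,\mathrm{Var}[\mathcal{G}^{(j+1)}|Y_0]\le\mathrm{Var}[\mathcal{G}^{(j)}|Y_0]$ by total variance, iterates down to the unconditioned $\mathcal{G}^{(0)}$, and matches the $MN$ (conditionally) independent samples on both sides.

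Two points where you diverge from, or fall short of, that execution. First, your opening move --- decomposing $\mathrm{Var}_\tau[\tilde g]$ by conditioning on $Y_0$ --- is vacuous as written: the vanilla rollouts are independent of $Y_0$, so $\mathrm{Var}_{\tau|Y_0}[\tilde g\mid Y_0]=\mathrm{Var}_\tau[\tilde g]$ and your decomposition is an equality with a zero second term. The coupling that does real work is not between $\tilde g$ and $Y_0$ but the distributional identity between $Y_0$'s prefix and a vanilla rollout's prefix, which you do state in your first paragraph and which is precisely how the paper closes the argument; you should apply the total-variance recursion step by step along the prefix rather than in one shot at the level of the whole estimator. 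Second, you attempt to handle the group-relative advantage $(r-\mathrm{mean})/\mathrm{std}$ as a stratified control variate; this part remains genuinely open in your plan (a leave-in empirical mean and a random normalizer do not give guaranteed finite-sample variance reduction, as you yourself note), whereas the paper sidesteps it entirely by proving the proposition for $\hat r(q,y)=r(q,y)-\log\pi_{\mathrm{ref}}(y|q)+\log\pi_\theta(y|q)$, a fixed function of the sample rather than a group statistic. Your plan goes through once the first point is repaired and the baseline issue is either dropped (to match what the paper actually proves) or handled with the leave-one-out device you propose; the unbiasedness discussion is not needed for the variance claim, and note the paper's own remark concedes $\hat g$ is biased for a single fixed $Y_0$.
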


\begin{proof}

For the objective in Eq~\ref{eq:optimization_reinforce_loss}, the vanilla policy estimator $\Tilde{g}[\theta]$ can be written as
\begin{equation}
    \Tilde{g}[\theta] = \frac{1}{|\mathcal{Q}|N}\sum_{q\in\mathcal{Q}}\sum_{i=1, \dots, N} ({r}(q, y_i) - \log\pi_\text{ref}(y_i|q) + \log\pi_\theta(y_i | q ))\nabla_\theta\log\pi_\theta(y_i | q), \nonumber
\end{equation}
where $y_i$ are randomly sampled responses from the policy, and $q$ is a query in the query set $\mathcal{Q}$.
Suppose $y_i$ can be written as the concatenation of random variables at different steps $y_i = S_{1}\oplus \dots \oplus S_{M'}$.
In the following part, we let $\hat{r}(q, y_i) = {r}(q, y_i) - \log\pi_\text{ref}(y_i|q) + \log\pi_\theta(y_i | q )$ for simplicity.

The policy gradient estimator $\hat{g}[\theta]$ obtained with our method with guidance $Y_0$ can be written as 
\begin{equation}
    \hat{g}[\theta|Y_0] = \frac{1}{|\mathcal{Q}|MN}\sum_{q\in\mathcal{Q}}\sum_{i=1,\dots,N}\sum_{j=0,\dots,M-1}\hat{r}(q, s_0\oplus \cdots \oplus s_j\oplus y_{ij})\nabla_\theta\log\pi_\theta(y_{ij} | q\oplus s_0\oplus\cdots\oplus s_j). \nonumber
\end{equation}
Here, we define $s_0$ as the empty string, and $y_{ij}$ represents a response sampled to complete from step $j$.

We define a random variable that conditions on $Y_0$ as follows:
\begin{equation}
    \mathcal{G}^{(i)}|_{Y_0} = \frac{1}{|\mathcal{Q}|}\sum_{q\in\mathcal{Q}}\hat{r}(q, s_0\oplus \cdots \oplus s_i\oplus y)\nabla_\theta\log\pi_\theta(y | q\oplus s_0\oplus\cdots\oplus s_i). \nonumber
\end{equation}
$\mathcal{G}^{(i)}|_{Y_0}$ represents the gradient when given query $q$ and the first $i$ steps of its corresponding guidance $Y_0$ (i.e. $s_0\oplus \cdots \oplus s_i$).
In this case, we can represent $\hat{g}[\theta|Y_0]$ and $\Tilde{g}[\theta]$ with $\mathcal{G}^{(i)}$ as:
\begin{equation}\label{eq:prop3_1}
    \begin{gathered}
        \hat{g}[\theta|Y_0] = \frac{1}{MN}\sum_{i=1,\dots,N}\sum_{j=0,\dots,M-1}\mathcal{G}^{(j)}_i|_{Y_0}, \\
        \Tilde{g}[\theta] = \frac{1}{MN}\sum_{i=1}^{MN}\mathcal{G}^{(0)}_i.
    \end{gathered}
\end{equation}
We modify the number of Monte Carlo samples $N$ to $MN$ of $\Tilde{g}[\theta]$ to match the number of samples of $\hat{g}[\theta|Y_0]$.
Since samples are collected independently, the variance of the policy gradient estimator can be written as:
\begin{equation*}
    \begin{gathered}
        \text{Var}_{\tau|Y_0}\hat{g}[\theta|Y_0] = \frac{1}{M^2N}\sum_{j=0,\dots,M-1}\textrm{Var}[\mathcal{G}^{(j)}|{Y_0}], \\
        \text{Var}_\tau\Tilde{g}[\theta] = \frac{1}{M^2N}M\textrm{Var}[\mathcal{G}^{(0)}].
    \end{gathered}
\end{equation*}
Now we want to show that $\mathbb{E}_{Y_0}\textrm{Var}[\mathcal{G}^{(j)}|Y_0]\leq \textrm{Var}[\mathcal{G}^{(0)}]$ for $j\geq0$.
Let's consider the relation between $\mathcal{G}^{(j)}|_{Y_0}$ and $\mathcal{G}^{(j+1)}|_{Y_0}$:
\begin{equation*}
    \mathcal{G}^{(j+1)}|_{Y_0} = \mathcal{G}^{(j)}|_{Y_0, S_{j+1}=s_{j+1}}.
\end{equation*}
By the law of total variance, we have
\begin{equation*}
    \begin{gathered}
        \text{Var}[\mathcal{G}^{(j)}|Y_0] = \text{Var}[\mathbb{E}_{S_{j+1}}[\mathcal{G}^{(j)}|Y_0, S_{j+1}=s_{j+1}]] + \mathbb{E}_{S_{j+1}}[\text{Var}[\mathcal{G}^{(j)}|Y_0, S_{j+1}=s_{j+1}]].
    \end{gathered}
\end{equation*}
As variance is non-negative, we have:
\begin{equation*}
    \text{Var}[\mathcal{G}^{(j)}|Y_0] \geq \mathbb{E}_{S_{j+1}}\text{Var}[\mathcal{G}^{(j+1)}|Y_0].
\end{equation*}
Since the guidance solution $Y_0$ is randomly sampled from the policy's generation, its transition probability is the same as the one during Monte Carlo samples.
This means completing an intermediate state from the guidance is equivalent to directly sampling from the start.
Thus, we have:
\begin{equation*}
    \begin{gathered}
        \text{Var}[\mathcal{G}^{(j)}|Y_0] \geq \mathbb{E}_{S_0}\text{Var}[\mathcal{G}^{(1)}|Y_0] \geq \mathbb{E}_{S_0, S_1}\text{Var}[\mathcal{G}^{(2)}|Y_0] \geq\cdots \mathbb{E}_{S_0, S_1,\dots, S_{M-1}}\text{Var}[\mathcal{G}^{(M)}|Y_0], \\
        \text{Var}[\mathcal{G}^{(0)}] \geq \mathbb{E}_{Y_0}\text{Var}[\mathcal{G}^{(j)}|Y_0], \forall j = 0, 1, \dots, M.
    \end{gathered}
\end{equation*}
Therefore, 
\[
    \mathbb{E}_{Y_0}\text{Var}_{\tau|Y_0}[\hat{g}|Y_0] \leq \text{Var}_{\tau}[\Tilde{g}].
\]

\end{proof}

\begin{remark}\label{Remark:MARGE}
    Based on the proof in \cref{proposition:variance} (\cref{eq:prop3_1}), we may even conclude that, if the guidance solution $Y_0$ is sampled from the policy, then the MARGE gradient estimation is unbiased in expectation over $Y_0$.
    However, as we only select one solution for each query, in our case, the bias of the MARGE estimator is not zero.
    Nevertheless, this issue can be mitigated by using multiple guidance solutions for each query when more computation is available.
\end{remark}

\section{Scaling Trends of Self-Training}
\label{Appendix:scaling_trends}

Based on our results of training different algorithms on Qwen2-Instruct, we find that, before performance saturation, the logarithm function $y=c_1+c_2\ln{x}$ best describes the scaling trend between consumed self-training samples and the resulting performance before performance saturation.
When using the accuracy on MATH500 as an indicator, we have the fitted trends shown in Tab.~\ref{tab:explicit_scaling_trend} and Fig.~\ref{fig:explicit_scaling_trend}.
The metrics above clearly showcase the effectiveness of MARGE in improving scaling training data.
The other algorithms are excluded as they saturate too fast to gain enough datapoints for regression.

\begin{figure}[htbp]
    \centering
    \includegraphics[width=0.6\linewidth]{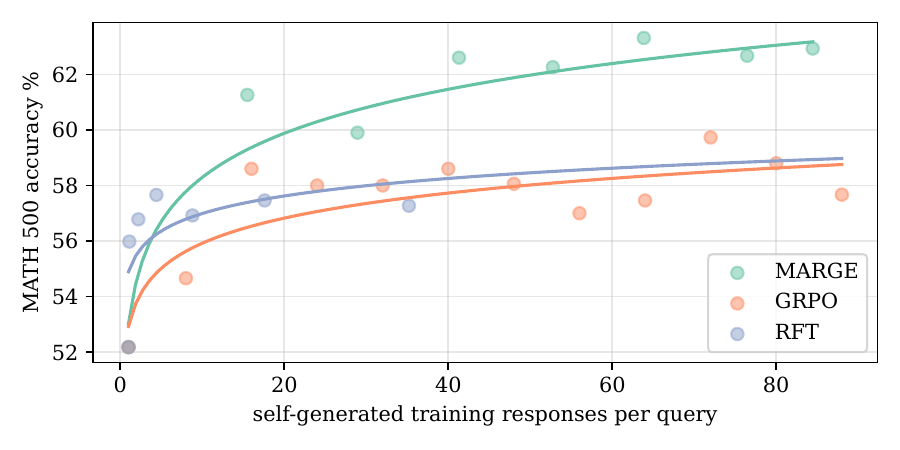}
    \caption{The fitted line and datapoints for different methods}
    \label{fig:explicit_scaling_trend}
\end{figure}

\begin{table}[htbp]
\centering
\caption{The fitted coefficients for different methods}
\label{tab:explicit_scaling_trend}
\vspace{0.05in}
\begin{tabular}{ccc}
\toprule
      & $c_1$ & $c_2$ \\
      \midrule
MARGE & 53.05 & 2.287 \\
GRPO  & 52.92 & 1.302 \\
SFT(RFT)   & 54.89 & 0.99  \\
\bottomrule
\end{tabular}
\end{table}

\section{Prompts}
\label{Appendix:prompts}

For evaluation, we adopt their default chat template for Qwen2-7B-Instruct, Qwen2.5-7B-Instruct, and Llama3.1-8B-Instruct.
The prompt is adopted as follows.

\begin{tcolorbox}[title = {Prompt for Evaluation}]
\{Question\}

Please reason step by step, and put your final answer within \textbackslash boxed\{\}.
\end{tcolorbox}

For MetaMath-Mistral-7B, we use the same Alpaca format as their official repo\footnote{https://huggingface.co/meta-math/MetaMath-Mistral-7B}.

\begin{tcolorbox}[title = {Prompt for Evaluating MetaMath-Mistral-7B model}]
    Below is an instruction that describes a task. Write a response that appropriately completes the request.\\\\
    \#\#\# Instruction:\{Question\}\\
    Please reason step by step, and put your final answer within \textbackslash boxed\{\}.\\\\
    \#\#\# Response: Let's think step by step.
\end{tcolorbox}

During training, we divide a guidance solution into multiple reasoning steps by identifying special tokens or counting token number.
In the case of identifying special delimiters like \texttt{"Step i:"}, we need to ensure the model outputs in a structured way.
Here, we add a CoT prefix \texttt{"Let's think step by step.\textbackslash n Step 1:"} to the model's output.
It ensures the output follows the format and contains the delimiters required.
We apply this technique to the Qwen2-7B-Instruct and Qwen2.5-7B-Instruct models.

\section{Case Study}
\label{Appendix:CaseStudy}

In this section, we provide an illustrative example of how MARGE works with Qwen2-7B-Instruct in \cref{fig:case_study_qwen} and with Llama3.1-8B-Instruct in \cref{fig:case_study_llama}.

\begin{figure}[htbp]
    \vskip -0.1in
    \centering
    \includegraphics[width=0.9\textwidth]{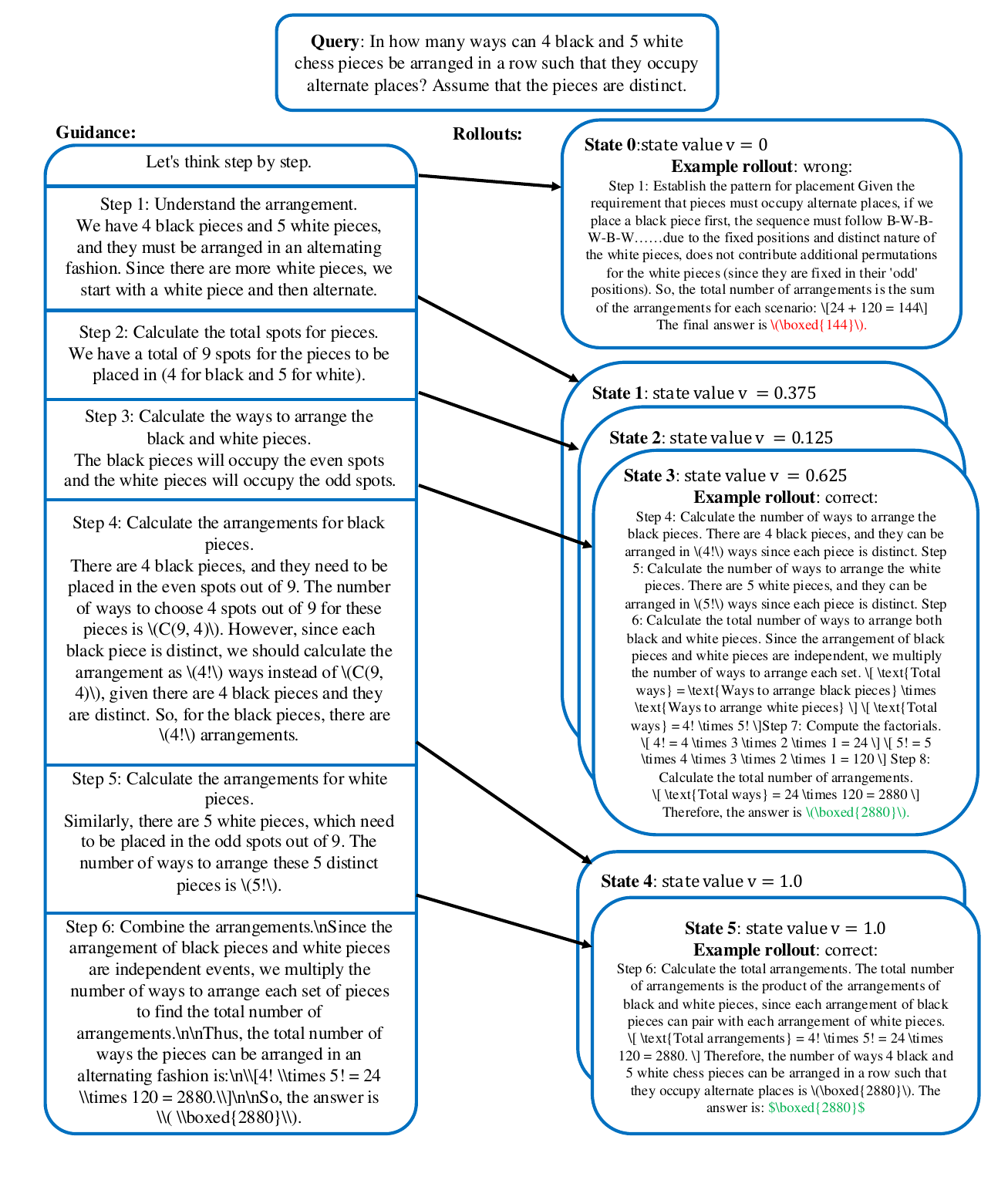}
    \caption{An illustrative example of how MARGE works with Qwen2-7B-Instruct.
    In this example, we use the delimiter \texttt{"Step i:"} to divide the guidance solution into $6$ reasoning steps.
    In particular, to make sure the generated responses comply with the delimiter format, we add the CoT prefix for generation on all states.
    The arrows connect intermediate states with the exploration process from that state, including generated responses that complete the prefixes and corresponding value estimation.
    Of the $6$ intermediate states in the guidance hit, we visualize $3$ states in detail, showcasing a randomly selected rollout at this state.
    On each state, we sample $8$ responses to complete it and estimate the state value.
    We display its estimated value and a randomly selected rollout as an example.
    The example in state $0$ is partially omitted for a better visualization.}
    \label{fig:case_study_qwen}
\end{figure}

\begin{figure}[htbp]
    \centering
    \includegraphics[width=0.9\linewidth]{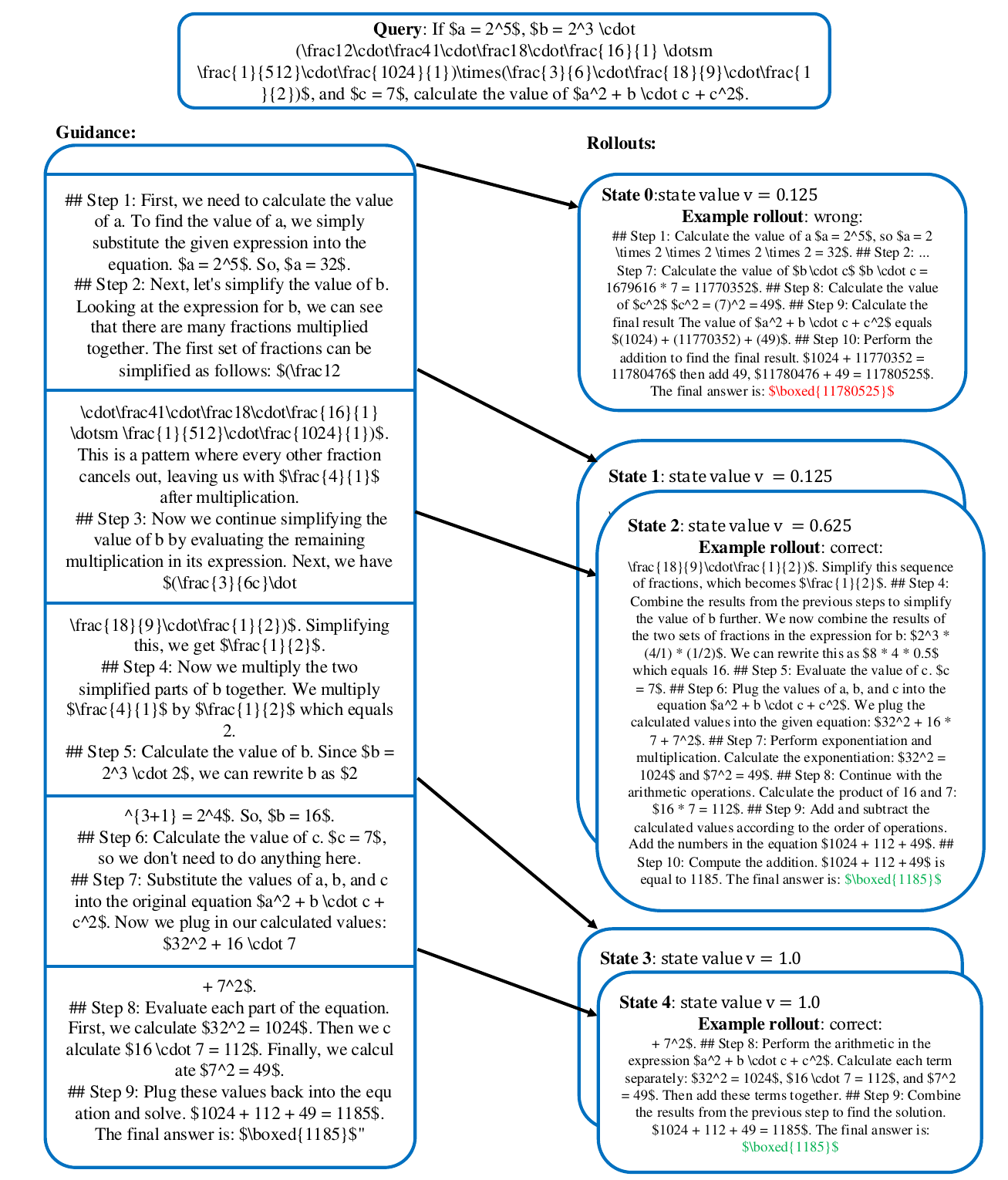}
    \caption{
    An illustrative example of how MARGE works with Llama3.1-8B-Instruct.
    In this example, we split the guidance based on the total token number.
    We derive the intermediate states such that the token number in each intermediate state forms an arithmetic sequence.
    The arrows connect intermediate states with the exploration process from that state, including generated responses that complete the prefixes and corresponding value estimation.
    Of the $5$ intermediate states in the guidance hit, we visualize $3$ states in detail, showcasing a randomly selected rollout at this state.
    On each state, we sample $8$ responses to complete it and estimate the state value.
    We display its estimated value and a randomly selected rollout as an example.
    The example in state $0$ is partially omitted for a better visualization.
    }
    \label{fig:case_study_llama}
\end{figure}

\section{Failure Analysis}
\label{Appendix:FailureAnalysis}

Based on our theoretical analysis and empirical results above, MARGE is capable of improving the exploration process and finally the reasoning ability of LLM.
However, there are still outlier cases when the model fails to accomplish such goals.
Here, we focus on two typical types of failure, from the experiments on Qwen2-7B-Instruct:

\begin{enumerate}[itemsep=4pt,topsep=0pt,parsep=0pt]
    \item queries that our trained model failed to answer;
    \item cases where our model failed to find more preference pairs.
\end{enumerate}

\paragraph{Wrong Solutions}
We select a sample from the MATH test set, which the trained Qwen2-7B-Instruct checkpoint failed to solve.
We mark wrong reasoning with red.

\begin{tcolorbox}[breakable,title={A Failed Case from MATH}]
    \textbf{Question:} Joe was in charge of lights for a dance. The red light blinks every two seconds, the yellow light every three seconds, and the blue light every five seconds. If we include the very beginning and very end of the dance, how many times during a seven minute dance will all the lights come on at the same time? (Assume that all three lights blink simultaneously at the very beginning of the dance.)\\\\
    \textbf{Correct Solution:} The three lights blink simultaneously $t$ seconds after the start of the dance if and only if $t$ is a common multiple of 2, 3, and 5. Recall the common multiples of a set of integers are precisely the multiples of the least common multiple. Since 2, 3, and 5 are relatively prime, their least common multiple is $2\cdot 3\cdot 5 = 30$. Thus the light blinks $t$ seconds after the beginning of the song for $t=0,1,2,\ldots,14$, and after 14 thirty-second periods, the song ends. Thus the lights blink in unison a total of $\boxed{15}$ times.\\\\
    \textbf{Model Output:} To find out how many times all three lights (red, yellow, and blue) come on at the same time during a seven-minute dance, we need to determine the least common multiple (LCM) of the blinking intervals of the three lights: 2 seconds, 3 seconds, and 5 seconds. This LCM will give us the interval at which all three lights blink simultaneously.

1. \textbf{Prime Factorization:}\\
   - \(2 = 2\) \\
   - \(3 = 3\) \\
   - \(5 = 5\)

2. \textbf{Calculate LCM:}
   The LCM of a set of numbers is found by taking the highest power of all prime numbers appearing in the factorization of these numbers.\\
   - The prime numbers involved are 2, 3, and 5.\\
   - The highest power of each prime number in the factorizations is \(2^1\), \(3^1\), and \(5^1\).

   Therefore, the LCM is \(2^1 \times 3^1 \times 5^1 = 2 \times 3 \times 5 = 30\).

   This means all three lights blink simultaneously every 30 seconds.

3. \textbf{Convert Dance Duration to Seconds:}
   A seven-minute dance lasts \(7 \times 60 = 420\) seconds.

4. \textbf{Calculate Number of Simultaneous Blinks:}
   To find out how many times all three lights blink simultaneously during the 420 seconds, \textcolor{red}{divide the total time by the interval:
   \[
   \frac{420}{30} = 14
   \]}

Thus, all three lights come on at the same time \textbf{14 times} during the seven-minute dance.

The final answer is \(\boxed{14}\).
\end{tcolorbox}

\paragraph{Invalid Guided Exploration}
As discussed in Appendix~\ref{Appendix:proof}, MARGE may not discover more valid training data when scaling generation unless the requirements are satisfied, especially when the intermediate state values increase (or decrease) too fast.
Here, we showcase a query, along with its intermediate states, that fails to gain the exploration bonus because of this.
As demonstrated, this is caused by the fact that many later states have state values of $1$ (or $0$).
Designing mechanisms to filter out ineffective guidance or unnecessary states can potentially improve this framework.

\begin{tcolorbox}[breakable]

\textbf{Question:} In a zoo there are 30 penguins, 22 zebras, and 8 tigers with some zookeepers. If the total number of heads are 132 less than the number of feet, the number of zookeepers is

\textbf{State 0:} (state value:$0.125$) Let's think step by step.\\Step 1:

\textbf{State 1:} (state value:$0.375$) Understanding the information given:\\
- There are 30 penguins. Penguins have 2 feet each.\\
- There are 22 zebras. Zebras have 4 feet each.\\
- There are 8 tigers. Tigers have 4 feet each.\\
- There are some zookeepers, but we don't know how many yet. Let's denote the number of zookeepers as $Z$.\\Step 2:

\textbf{State 2:} (state value: $1$) Calculating the total number of feet:\\Each animal and each zookeeper has 2 feet each. So, the total number of feet is:\[ \text{Total feet} = (30 \times 2) + (22 \times 4) + (8 \times 4) + (Z \times 2) \]\[ \text{Total feet} = 60 + 88 + 32 + 2Z \]\[ \text{Total feet} = 180 + 2Z \]Step 3:

\textbf{State 3:} (state value: $1$) Calculating the total number of heads:\\Each animal and each zookeeper has 1 head. So, the total number of heads is:\[ \text{Total heads} = 30 + 22 + 8 + Z \]\[ \text{Total heads} = 60 + Z \]\\Step 4:

\textbf{State 4:} (state value: $1$) Using the given information:\\We're told that the total number of heads is 132 less than the number of feet. This gives us the equation:\[ \text{Total heads} + 132 = \text{Total feet} \]\[ 60 + Z + 132 = 180 + 2Z \]\\Step 5:

\end{tcolorbox}

\end{document}